\documentclass{article}

\usepackage[preprint]{neurips_2026}

\usepackage[utf8]{inputenc} 
\usepackage[T1]{fontenc}    
\usepackage{hyperref}       
\usepackage{url}            
\usepackage{booktabs}       
\usepackage{amsfonts}       
\usepackage{nicefrac}       
\usepackage{microtype}      
\usepackage{xcolor}         
\usepackage{makecell}
\usepackage{graphicx}
\usepackage{wrapfig}
\usepackage{subcaption}
\usepackage[font=small]{caption}   

\usepackage{amsmath,amsfonts,bm}
\usepackage{amssymb}

\def\eqref#1{equation~\ref{#1}}

\def\1{\bm{1}}

\def\mI{{\bm{I}}}

\DeclareMathAlphabet{\mathsfit}{\encodingdefault}{\sfdefault}{m}{sl}
\SetMathAlphabet{\mathsfit}{bold}{\encodingdefault}{\sfdefault}{bx}{n}

\newcommand{\R}{\mathbb{R}}

\usepackage{amsthm}

\newtheorem{thm}{Theorem}[section]
\newtheorem{lem}[thm]{Lemma}
\newtheorem{cor}[thm]{Corollary}

\theoremstyle{definition}

\theoremstyle{remark}

\title{A Mutual Information Lower Bound for Multimodal Regression Active Learning}

\author{%
  Leonardo Ferreira Guilhoto\thanks{Equal contribution.} \\
  Graduate Group on Applied Mathematics \& Computational Science \\
  Philadelphia, PA 19104 \\
  \texttt{guilhoto@sas.upenn.edu} \\
  \AND
  Akshat Kaushal\footnotemark[1] \\
  Department of Computer and Information Science\\
  University of Pennsylvania \\
  Philadelphia, PA 19104\\
  \texttt{akaush@seas.upenn.edu} \\
  \AND
  Paris Perdikaris \\
  Department of Mechanical Engineering and Applied Mechanics\\
  University of Pennsylvania \\
  Philadelphia, PA 19104\\
  \texttt{pgp@seas.upenn.edu} \\
}

\begin{document}

\maketitle

\begin{abstract}
    Active learning for continuous regression has lacked an acquisition function that targets epistemic uncertainty when the predictive distribution is multimodal: variance misses modal disagreement, and information-theoretic targets like BALD are designed for discrete outputs. We introduce a Two-Index framework that makes this separation explicit: one stochastic index selects among competing model hypotheses (epistemic source), while a second governs within-hypothesis randomness (aleatoric source). An entropy decomposition within the framework identifies the mutual information between the output and the epistemic index as a principled acquisition objective, and we prove this quantity vanishes as the model is trained on growing datasets, confirming that it captures exactly the uncertainty data can resolve. Because this mutual information is intractable for continuous outputs, we derive the Mutual Information Lower Bound (MI-LB) acquisition function, a closed-form approximation for Mixture Density Network ensembles. On benchmarks featuring multimodal systems, MI-LB matches or beats every baseline evaluated and is the only method to do so consistently -- geometric and Fisher-based baselines compete only when the input space already encodes the multimodality, and collapse otherwise.
\end{abstract}

\section{Introduction}
\label{sec:introduction}

Active learning in continuous, multivariate settings requires acquisition functions that can distinguish uncertainty the model can reduce by collecting more data (epistemic) from uncertainty intrinsic to the data-generating process (aleatoric). Existing approaches either operate in restricted settings, such as discrete classification~\cite{BALD-2011} or unimodal Gaussian outputs~\cite{kendall2017uncertainties}, or rely on scalar summaries like predictive variance that conflate the two sources. The problem is worse in multimodal settings: two model hypotheses may assign the same mean and variance to an output while disagreeing on the number and location of modes. Variance alone cannot detect this kind of distributional disagreement.

We address this gap with a framework that keeps the two sources of uncertainty separate by construction. The framework extends the Epistemic Neural Network formalism~\cite{epistemicNNs} by introducing two independent stochastic indices: an epistemic index $Z \sim P_Z$ that parameterizes a family of hypotheses, and an aleatoric index $\epsilon \sim P_\epsilon$ that governs within-hypothesis stochasticity. A learned map $g_\theta(x; z, \epsilon)$ transforms these indices and the input into the output space. This approach provides a common vocabulary for a broad class of uncertainty quantification methods, facilitating the exchange of analyses among model families.

The two-index structure yields an entropy decomposition that separates predictive uncertainty into an epistemic term, the mutual information $I(Y; Z \mid x)$, and an aleatoric term, the expected conditional entropy $\mathbb{E}_Z[H(Y \mid x, Z)]$. Despite being a natural acquisition target, the mutual information requires computing differential entropies that are intractable for general continuous outputs. When the model family is an ensemble of Mixture Density Networks (MDNs), the predictive and conditional distributions are both Gaussian mixtures, and known upper and lower bounds on Gaussian-mixture entropy~\cite{huber2008} can be combined to produce a tractable lower bound on $I(Y; Z \mid x)$. We call this the \textit{Mutual Information Lower Bound} (MI-LB) acquisition function. Across three benchmarks with controllable multimodality, MI-LB matches or beats every baseline evaluated and is the only method to do so consistently. Geometric and Fisher-based baselines compete only when the input geometry already encodes the multimodal structure, and collapse otherwise. To the best of our knowledge, MI-LB is the first acquisition function specifically designed for multimodal continuous settings.

The main contributions of this paper are:
\begin{itemize}
    \item The MI-LB acquisition function for active learning in continuous multivariate settings, with a proof that it lower bounds the true epistemic mutual information $I(Y; Z \mid x)$ and a closed-form expression that is efficient to evaluate under standard MDN assumptions.
    \item The Two-Index framework for disentangling epistemic and aleatoric uncertainty, applicable to a broad class of model families, with a proof that the epistemic term $I(Y; Z \mid x)$ vanishes with sufficient data under standard well-specification and consistency conditions (Appendix~\ref{app:proof_mi_collapse}).    
    \item Experiments on three multimodal benchmarks showing that MI-LB matches or beats every baseline evaluated (Random, Variance, BAIT, Core-Set), and is the only method to do so consistently -- geometric and Fisher-based baselines compete only when the input geometry already encodes the multimodal structure, and collapse when the modal disagreement lives in output space.
\end{itemize}

\subsection{Related Work}
\label{sec:related_work}

Decomposing predictive uncertainty into aleatoric and epistemic components has been a longstanding goal in the machine learning literature~\cite{kendall2017uncertainties, valdenegro2022deeper}. Foundational approaches include Bayesian Neural Networks \cite{Izmailov2021WhatAreBNNPosteriors}, Deep Ensembles \cite{DeepEnsembles2021}, and the Epistemic Neural Network framework \cite{epistemicNNs}, which provides a unified interface that does not require explicit Bayesian inference~\cite{valdenegro2022deeper, epistemicNNs}. Despite these advances, recent benchmarking shows that practical disentanglement remains elusive: modern evidential, variational, and deterministic methods exhibit rank correlations often exceeding 0.94 between their aleatoric and epistemic estimates~\cite{mucsanyi2024benchmarking}. This problem is particularly pronounced in continuous, multimodal regression tasks, where fitting a single Gaussian to diverse target modes inflates predictive variance, motivating the use of Mixture Density Networks (MDNs)~\cite{bishop1994mixture, guilhoto2026multimodalscientificlearningdiffusions}.

In the context of active learning, selecting informative data points to minimize labeling costs has traditionally relied on geometric or gradient-based heuristics~\cite{holzmuller2023framework}. Geometric methods such as Core-Set minimize L2 distances in embedding space~\cite{sener2018active}, while Fisher-based methods like BAIT optimize bounds on the maximum likelihood estimator error using network gradients~\cite{ash2021gone}. Both have important failure modes in multimodal settings: Core-Set ensures coverage in the learned embedding space and can fail when multimodality manifests in the output distribution rather than in the geometry of the inputs~\cite{sener2018active}, and BAIT's last-layer Fisher embedding provides poor candidate rankings when the predictive likelihood is highly multimodal, since the gradient signal at a single Monte-Carlo sample is dominated by within-mode variance. In these regimes, black-box acquisition strategies based on the empirical predictive covariance of ensembles often outperform white-box gradient methods~\cite{kirsch2022sbal, holzmuller2023framework}.

Information-theoretic approaches such as Bayesian Active Learning by Disagreement (BALD) use mutual information to target epistemic uncertainty directly~\cite{BALD-2011, kirsch2019batchbald}. BALD is effective in discrete classification, but extending it to continuous regression requires evaluating differential entropy for distributions like Gaussian mixtures, which lacks a closed-form solution~\cite{huber2008}.

\section{The Two-Index Approach for Disentangling Uncertainties}
\label{sec:disentanglement}


\subsection{The Two-Index Generative Model}
\label{sec:two_index_model}

We build on the Epistemic Neural Network (ENN) framework~\cite{epistemicNNs} and the distribution network formalism of~\cite{guilhoto2026multimodalscientificlearningdiffusions} to construct a unified model that explicitly represents both epistemic and aleatoric uncertainty through two independent stochastic indices.

In the ENN framework, a model is specified by a parameterized function $f_\theta$ and a fixed reference distribution $P_Z$. An \textit{epistemic index} $Z \sim P_Z$ is sampled and passed to the network alongside the input $x$, producing a prediction $f_\theta(x; Z)$. Crucially, $P_Z$ is not updated during training. Instead, training optimizes $\theta$ so that the mapping from $Z$ to predictions becomes increasingly invariant under $P_Z$ as more data is collected. Sensitivity of predictions to $Z$ reflects epistemic uncertainty: the model has not yet identified a unique function consistent with the data.

We extend this framework by introducing the \textit{aleatoric index} $\epsilon \sim P_\epsilon$, a second fixed reference distribution independent of both $Z$ and $\theta$. For a given epistemic index $Z = z$ and input $x$, the model defines a conditional distribution over outputs $Y$ through the generative mapping
\begin{equation}
    Y = g_\theta(x;\, z,\, \epsilon), \qquad Z \sim P_Z, \quad \epsilon \sim P_\epsilon, \quad Z \perp \epsilon.
    \label{eq:generative}
\end{equation}
Here $g_\theta$ is a learned transformation that maps the input $x$ and two independent sources of randomness into the output space. The role of $\epsilon$ is to represent irreducible stochasticity: for any fixed $z$, sampling $\epsilon \sim P_\epsilon$ produces draws from the aleatoric conditional distribution $p_\theta(y \mid x, z)$. The role of $Z$ is to represent reducible uncertainty: variation in predictions across different realizations of $Z$ reflects hypotheses about the data-generating process that have not yet been eliminated by the available data. The network $g_\theta$ learns a transformation from the tractable distributions $P_Z$ and $P_\epsilon$ into a flexible, potentially multimodal distribution over the output $Y$.

\paragraph{Induced distributions.}
The joint generative model in (\ref{eq:generative}) induces a hierarchy of distributions, made precise using the push-forward operator. For a measurable space $(\mathcal{Y}, \mathcal{B}(\mathcal{Y}))$ and a measurable map $h : \mathcal{E} \to \mathcal{Y}$, the \textit{push-forward} of a measure $\mu$ on $\mathcal{E}$ under $h$ is the measure $h_\# \mu$ on $\mathcal{Y}$ defined by
\begin{equation}\label{eq:push-forward}
    (h_\# \mu)(A) := \mu\bigl(\{\epsilon \in \mathcal{E} : h(\epsilon) \in A\}\bigr), \qquad A \in \mathcal{B}(\mathcal{Y}).
\end{equation}
For a fixed epistemic realization $z \in \mathcal{Z}$ and input $x \in \mathcal{X}$, the \textit{aleatoric conditional distribution} is defined as the push-forward of $P_\epsilon$ under the map $g_\theta(x; z, \cdot) : \mathcal{E} \to \mathcal{Y}$:
\begin{equation}
    p_\theta(\cdot \mid x, z) \;:=\; \bigl(g_\theta(x;\, z,\, \cdot)\bigr)_\# P_\epsilon.
    \label{eq:aleatoric_conditional}
\end{equation}
Marginalizing equation (\ref{eq:aleatoric_conditional}) over the epistemic index yields the \textit{predictive distribution}
\begin{equation}
    p_\theta(\cdot \mid x) \;:=\; \int_{\mathcal{Z}} p_\theta(\cdot \mid x, z)\, dP_Z(z).
    \label{eq:predictive}
\end{equation}
The predictive distribution conflates both sources of uncertainty. Disentangling them requires tracking the two indices separately, as formalized in the decompositions of Section~\ref{sec:decompositions}.

\paragraph{Interpretation.}
\label{sec:interpretation}
Each realization $Z = z$ instantiates a \textit{hypothesis} $p_\theta(\cdot \mid x, z)$ for the true conditional $p^*(\cdot \mid x)$; within a fixed hypothesis, $\epsilon$ governs irreducible stochasticity. Disagreement across realizations of $Z$ is the origin of epistemic uncertainty. The predictive distribution has the same mathematical form as Bayesian Model Averaging~\cite{hoeting1999bayesian}, but the framework does not require a strict Bayesian formulation of $P_Z$: as illustrated in Appendix~\ref{app:two_index_examples}, it accommodates ensembles, BNNs, conditional flow matching, and VAEs under a common vocabulary.

\subsection{Quantifying the Two Sources of Uncertainty}
\label{sec:decompositions}

The two-index structure enables exact decompositions of predictive uncertainty into its epistemic and aleatoric components. We provide two such decompositions via variance and entropy.

\paragraph{Variance-Based Decomposition. }\label{sec:var_decomp}
In the scalar output ($Y\in\R$) case~\cite{valdenegro2022deeper}, the law of total variance yields an exact additive decomposition. Writing $Y = g_\theta(x; Z, \epsilon)$ with $Z \perp \epsilon$,
\begin{equation}
    \underbrace{\mathrm{Var}_{Z,\epsilon}(Y|x)}_{\text{total variance}} = \underbrace{\mathbb{E}_Z\bigl[\mathrm{Var}_\epsilon(Y \mid Z, x)\bigr]}_{\text{aleatoric variance}} + \underbrace{\mathrm{Var}_Z\bigl(\mathbb{E}_\epsilon[Y \mid Z, x]\bigr)}_{\text{epistemic variance}}.
    \label{eq:var_decomp}
\end{equation}
The aleatoric term $\mathbb{E}_Z[\mathrm{Var}_\epsilon(Y \mid Z, x)]$ averages the output variance within each hypothesis over epistemic realizations; it measures the average irreducible spread. The epistemic term $\mathrm{Var}_Z(\mathbb{E}_\epsilon[Y \mid Z, x])$ measures the variance of the conditional mean across hypotheses; it captures disagreement between different epistemic realizations of the model.

This decomposition is exact and interpretable, but it summarizes uncertainty through second moments alone. As noted in the introduction to this section, two hypotheses can share the same conditional mean and variance while assigning probability mass to entirely different modes. The variance-based epistemic term would report an incomplete summary in such a case. For multimodal or non-Gaussian distributions, a richer summary is needed, as further discussed in Appendix \ref{app:variance_failure}.

\paragraph{Entropy-Based Decomposition. }\label{sec:entropy_decomp}

We complement (\eqref{eq:var_decomp}) with an entropy-based decomposition that captures distributional disagreement beyond second moments. Define the total predictive uncertainty as the differential entropy of $p_\theta(y \mid x)$:
\begin{equation}
    H_{Z,\epsilon}(Y \mid x) = \mathbb{E}_{Z,\epsilon}\bigl[-\log p_\theta(Y \mid x)\bigr].
\end{equation}
Applying the chain rule of mutual information, this decomposes as
\begin{equation}
    H_{Z,\epsilon}(Y \mid x) = \underbrace{\mathbb{E}_Z\bigl[H_\epsilon(Y \mid Z, x)\bigr]}_{\text{aleatoric uncertainty}} + \underbrace{I(Y;\, Z \mid x)}_{\text{epistemic uncertainty}},
    \label{eq:entropy_decomp}
\end{equation}
where $I(Y; Z \mid x)$ is the \emph{mutual information} between the output $Y$ and the epistemic index $Z$, conditioned on the input $x$. The aleatoric term $\mathbb{E}_Z[H_\epsilon(Y \mid Z, x)]$ is the expected entropy of the conditional distribution $p_\theta(\cdot \mid x, z)$, averaged over the epistemic index. The epistemic term $I(Y; Z \mid x)$ quantifies how much knowing $Z$ reduces uncertainty about $Y$: it is large when different epistemic indices yield meaningfully different conditional distributions, and vanishes when all hypotheses agree. Because mutual information is sensitive to the full shape of each hypothesis distribution, it detects the kind of modal disagreement that variance misses.

\subsection{Asymptotic Guarantees}\label{sec:asymptotics}

The decompositions (\ref{eq:var_decomp}) and (\ref{eq:entropy_decomp}) have the correct asymptotic behavior by construction. As the size of the training dataset grows, $\theta$ is optimized to produce predictions that are consistent with the data under all probable realizations of $Z$. In the limit of infinite data and a well-specified model, the learned $\theta$ renders $g_\theta(x; z, \epsilon)$ approximately independent of $z$ for $P_Z$-almost all $z$, so that:
\begin{equation}
    I(Y;\, Z \mid x,\, \mathcal{D}_n) \xrightarrow{n \to \infty} 0, \qquad \mathbb{E}_Z\bigl[H_\epsilon(Y \mid Z, x,\, \mathcal{D}_n)\bigr] \xrightarrow{n \to \infty} H(p^*(\cdot \mid x)),
\end{equation}
where $p^*(\cdot \mid x)$ is the true conditional distribution of the data-generating process. Epistemic uncertainty vanishes as the model identifies the correct hypothesis, while aleatoric uncertainty converges to the irreducible entropy of the true process. Conversely, at an out-of-distribution input $x_{\mathrm{OOD}}$, training data provides no constraints on $Z$, so $I(Y; Z \mid x_{\mathrm{OOD}})$ can remain large, providing a principled basis for out-of-distribution detection. We note that the convergence of the aleatoric term to $H(p^*(\cdot \mid x))$ requires that the aleatoric model class --- the family of distributions representable by $g_\theta(x; z, \cdot)$ for fixed $z$ --- be sufficiently expressive to capture $p^*$. This is a non-trivial condition that motivates the use of flexible density estimators, such as Mixture Density Networks or implicit generative models, rather than restricting to unimodal Gaussian outputs.

This is stated informally as the following theorem, which is made precise and proved in Appendix~\ref{app:proof_mi_collapse}:
\begin{thm}[Informal: Epistemic Uncertainty Vanishes with Data]\label{thm:mi_collapse_informal}
If a model can represent the true data-generating process $p^*(\cdot \mid x)$ and its parameters converge to the correct values as the dataset grows to infinity, then $I(Y; Z \mid x) \to 0$. In other words, a well-trained model's predictions eventually agree across all epistemic index values, and all remaining uncertainty is aleatoric.
\end{thm}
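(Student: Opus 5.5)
We first make the hypotheses precise and then reduce the claim to a statement about Kullback--Leibler divergences. Write $p_n(\cdot\mid x,z) := p_{\theta_n}(\cdot\mid x,z)$ for the aleatoric conditional after training on $\mathcal{D}_n$, and $\bar p_n(\cdot\mid x) := \int p_n(\cdot\mid x,z)\,dP_Z(z)$ for the predictive distribution in (\ref{eq:predictive}). The standard identity for mutual information with a fixed prior $P_Z$ gives
\begin{equation*}
I(Y;Z\mid x,\mathcal{D}_n) = \int_{\mathcal{Z}} \mathrm{KL}\bigl(p_n(\cdot\mid x,z)\,\big\|\,\bar p_n(\cdot\mid x)\bigr)\,dP_Z(z).
\end{equation*}
This is consistent with the entropy decomposition (\ref{eq:entropy_decomp}). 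We will use the variational characterization: for any distribution $q$,
\begin{equation*}
\int \mathrm{KL}(p_n(\cdot\mid x,z)\,\|\,\bar p_n)\,dP_Z \;\le\; \int \mathrm{KL}(p_n(\cdot\mid x,z)\,\|\,q)\,dP_Z,
\end{equation*}
because the mixture $\bar p_n$ minimizes the right-hand side over $q$. Choosing $q = p^*(\cdot\mid x)$ gives the key bound
\begin{equation*}
0 \le I(Y;Z\mid x,\mathcal{D}_n) \le \int_{\mathcal{Z}} \mathrm{KL}\bigl(p_n(\cdot\mid x,z)\,\|\,p^*(\cdot\mid x)\bigr)\,dP_Z(z).
\end{equation*}
This step is routine and is what makes the argument work without ever computing the intractable mixture entropy.

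Next we formalize the informal hypotheses. \emph{Well-specification} means there is $\theta^*$ with $p_{\theta^*}(\cdot\mid x,z)=p^*(\cdot\mid x)$ for $P_Z$-a.e.\ $z$. \emph{Consistency} means $\theta_n\to\theta^*$, either almost surely or in probability over the draw of $\mathcal{D}_n$. We add a regularity assumption: the map $\theta\mapsto \mathrm{KL}(p_\theta(\cdot\mid x,z)\,\|\,p^*(\cdot\mid x))$ is continuous at $\theta^*$ for $P_Z$-a.e.\ $z$. Under these assumptions the integrand $k_n(z) := \mathrm{KL}(p_n(\cdot\mid x,z)\,\|\,p^*)$ tends to $0$ pointwise in $z$. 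For MDN ensembles with non-degenerate covariances this holds, since KL between Gaussian mixtures is continuous in the means, covariances and weights; the relevant parameters are finitely many and bounded near $\theta^*$.

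The main obstacle is exchanging the limit with the $P_Z$-integral. Pointwise convergence of $k_n$ does not by itself give $\int k_n\,dP_Z\to 0$. Our first attempt is dominated convergence: assume a uniform bound $\sup_{\theta\in U} k_\theta(z)\le K(z)$ with $K\in L^1(P_Z)$ on a neighborhood $U$ of $\theta^*$. In the ensemble case $P_Z$ is uniform on $M$ members, so the integral is a finite sum and the exchange is automatic. If this fails in general, a fallback is to use uniform-in-$z$ convergence, $\sup_z k_n(z)\to 0$, which follows from continuity when $\mathcal{Z}$ is compact and the map is jointly continuous.

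With the exchange justified, the sandwich bound gives $I(Y;Z\mid x,\mathcal{D}_n)\to 0$, in the same mode (a.s.\ or in probability) as $\theta_n\to\theta^*$. The companion statement, that the aleatoric term tends to $H(p^*(\cdot\mid x))$, then follows from (\ref{eq:entropy_decomp}) if we also show that the predictive entropy converges. For this we would invoke the same continuity and domination assumptions applied to the entropy functional.
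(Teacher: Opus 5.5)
Your argument is correct, but it takes a different route from the paper's proof (Theorem~\ref{thm:mi_collapse}, Appendix~\ref{app:proof_mi_collapse}).

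\textbf{The paper's route.} It bounds the KL to the mixture itself, $\mathrm{KL}(p_\theta(\cdot\mid x,z)\,\|\,p_\theta(\cdot\mid x))\le 2M_C\,\mathrm{TV}(p_\theta(\cdot\mid x,z),p_\theta(\cdot\mid x))$, via Lemma~\ref{lem:kl_tv}. This needs the bounded likelihood ratio A4. It then controls the TV term by a triangle inequality through $\theta^*$, using TV-continuity uniform in $z$ (A3) and joint convexity of TV, and finishes with the continuous mapping theorem.

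\textbf{Your route.} You use the compensation identity $I(Y;Z\mid x)=\min_q\mathbb{E}_Z[\mathrm{KL}(p(\cdot\mid x,Z)\,\|\,q)]$ with $q=p^*$. This sidesteps the mixture entirely and reduces the claim to continuity of the KL-to-truth plus an exchange of limit and integral.

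\textbf{Trade-offs.} They run in both directions.
\begin{itemize}
\item The paper works in the weaker TV topology and never needs $p_\theta(\cdot\mid x,z)\ll p^*$, since under A4 the KL to the mixture is at most $\log C$. For finite ensembles its hypotheses reduce to per-member TV continuity, which is weaker than your per-member KL continuity. They also survive support mismatch, where $\mathrm{KL}(p_\theta(\cdot\mid x,z)\,\|\,p^*)=\infty$ and your continuity assumption fails.
\item Your route replaces A4 and the uniformity in A3 by a domination condition, which matters for unbounded continuous indices. Take an epinet whose output is linear in a Gaussian index (Appendix~\ref{app:two_index_examples}). At any $\theta$ where the epinet term is nonzero, $\sup_z\mathrm{TV}(p_\theta(\cdot\mid x,z),p_{\theta^*}(\cdot\mid x,z))=1$ and the likelihood ratio in A4 is unbounded in $z$, so the paper's hypotheses fail. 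Meanwhile $\mathrm{KL}(p_\theta(\cdot\mid x,z)\,\|\,p^*)$ grows only quadratically in $z$ and is dominated near $\theta^*$ by a $P_Z$-integrable function, so your argument applies.
\end{itemize}

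\textbf{A phrasing fix.} If $\hat\theta_n\to\theta^*$ only in probability, the sentence ``$k_n(z)\to 0$ pointwise'' should be replaced. State instead that $\theta\mapsto\int k_\theta\,dP_Z$ is continuous at $\theta^*$, which your a.e.\ continuity plus domination on a neighborhood gives. Then apply the continuous mapping theorem, as the paper does.

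Your closing remark on the aleatoric term is not needed for this statement; the paper proves that part separately under A5.
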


\subsection{Implications for Data Acquisition}\label{sec:acquisition_implications}

The decomposition from (\ref{eq:entropy_decomp}) and Thm. \ref{thm:mi_collapse_informal} together identify $I(Y; Z \mid x)$ as the natural acquisition target for active learning. It isolates exactly the component of predictive uncertainty that additional data can reduce: epistemic disagreement among hypotheses. The variance-based epistemic term from (\ref{eq:var_decomp}) targets the same goal in principle, but summarizes disagreement through second moments alone; it can miss regions where hypotheses differ in modal structure rather than spread. The mutual information $I(Y; Z \mid x)$, by contrast, is sensitive to the full distribution of each hypothesis, making it the right objective when the conditional distribution $p^*(y \mid x)$ may be multimodal.

The remaining challenge is computational. Evaluating $I(Y; Z \mid x)$ requires computing differential entropies of the predictive and conditional distributions, which have no closed-form expression for general continuous outputs. In the next section, we show that when the model family is an ensemble of Gaussian mixtures, analytical entropy bounds can be used to construct a tractable lower bound on $I(Y; Z \mid x)$ that serves as an effective acquisition function.

\section{The Mutual Information Lower Bound (MI-LB) Acquisition Function}
\label{sec:gee}

\subsection{Entropy Intractability in Continuous Settings}

Re-arranging the entropy-based decomposition from (\ref{eq:entropy_decomp}), the mutual information can be written as
\begin{equation}
    I(Y;\, Z \mid x)
    = \underbrace{H(Y \mid x)}_{\text{(i) marginal entropy}}
    - \underbrace{\mathbb{E}_Z\bigl[H(Y \mid x, Z)\bigr]}_{\text{(ii) expected aleatoric entropy}}.
    \label{eq:mi_split}
\end{equation}
Computing this quantity requires evaluating two differential entropy terms: term (i), the entropy of the predictive distribution $p_\theta(\cdot \mid x)$ from~\eqref{eq:predictive}, and term (ii), the entropy of $Y \mid x, Z=z$ for a fixed $z$, averaged over $P_Z$. For \textit{implicit} distribution networks \cite{guilhoto2026multimodalscientificlearningdiffusions} --- models that produce samples of $Y$ via the map $g_\theta(x; z, \epsilon)$ but do not provide an analytical form for $p_\theta(\cdot \mid x, z)$ --- both terms must be estimated from samples using kernel density estimators, which may scale poorly with the dimension of $\mathcal{Y}$. For \textit{explicit} distribution networks, the density $p_\theta(\cdot \mid x, z)$ is available analytically, enabling structured approximation of both terms. We exploit this structure in the case of Mixture Density Networks, where each $p_\theta(\cdot \mid x, z)$ is a mixture of Gaussians.

\subsection{MDN Structure and the Marginal Mixture}

Assume $Z$ takes values in the discrete index set $\{1, \dots, n_{\mathrm{ens}}\}$ with $P_Z(Z = z) = w_z$, and suppose that for each $z$, the aleatoric conditional $p_\theta(\cdot \mid x, z)$ is a Gaussian mixture with $K$ components:
\begin{equation}
    p_\theta(y \mid x, z)
    = \sum_{i=1}^{K} \alpha_i^{(z)}(x)\,
    \mathcal{N}\!\left(y;\, \mu_i^{(z)}(x),\, C_i^{(z)}(x)\right),
    \label{eq:per_z_gmm}
\end{equation}
where $\alpha_i^{(z)}(x) > 0$, $\sum_i \alpha_i^{(z)}(x) = 1$, $\mu_i^{(z)}(x) \in \mathbb{R}^N$, and $C_i^{(z)}(x) \in \mathbb{R}^{N \times N}$ is a symmetric positive definite covariance matrix. Here $N$ denotes the dimension of the output space $\mathcal{Y}\subseteq \R^N$. Under this formulation, the predictive marginal distribution (\ref{eq:predictive}) is
\begin{equation}
    p_\theta(y \mid x)
    = \sum_{z=1}^{n_{\mathrm{ens}}} w_z\, p_\theta(y \mid x, z)
    = \sum_{z=1}^{n_{\mathrm{ens}}} \sum_{i=1}^{K}
    \underbrace{w_z\, \alpha_i^{(z)}(x)}_{=:\, \beta_{z,i}(x)}\,
    \mathcal{N}\!\left(y;\, \mu_i^{(z)}(x),\, C_i^{(z)}(x)\right),
    \label{eq:marginal_gmm}
\end{equation}
which is a Gaussian mixture with $n_{\mathrm{ens}} \cdot K$ components and weights $\beta_{z,i}(x) = w_z\, \alpha_i^{(z)}(x)$. Both terms in (\ref{eq:mi_split}) therefore require computing the differential entropy of a Gaussian mixture, for which no closed-form expression exists.

\subsection{Entropy Bounds for Gaussian Mixtures}

We make use of two bounds from~\cite{huber2008}. Let
$p(y) = \sum_{i=1}^{K} \pi_i\, \mathcal{N}(y; \mu_i, C_i)$ be a Gaussian mixture with $K$ components, weights $\pi_i$, means $\mu_i \in \mathbb{R}^N$, and covariances $C_i \in \mathbb{R}^{N \times N}$. \cite{huber2008} shows that the differential entropy $H(p)$ satisfies $H_{\mathrm{lower}} \leq H(p) \leq H_{\mathrm{upper}}$, where
\begin{align}
    H_{\mathrm{lower}}
    &= -\sum_{i=1}^{K} \pi_i \log\!\left(
        \sum_{j=1}^{K} \pi_j\,
        \mathcal{N}(\mu_i;\, \mu_j,\, C_i + C_j)
    \right),
    \label{eq:h_lower} \\
    H_{\mathrm{upper}}
    &= \sum_{i=1}^{K} \pi_i \left(
        -\log \pi_i
        + \frac{1}{2}\log\!\left((2\pi e)^N\, |C_i|\right)
    \right).
    \label{eq:h_upper}
\end{align}
Both bounds are computable in $O(K^2)$ operations. The upper bound (\ref{eq:h_upper}) decomposes as the weighted sum of the entropy of individual modes and is tight when the components have negligible overlap. The lower bound (\ref{eq:h_lower}) accounts for inter-component similarity through pairwise Gaussian evaluations at the component means. Under the standard MDN assumption of diagonal covariances $C_i = \mathrm{diag}(\sigma_{i,1}^2, \dots, \sigma_{i,N}^2)$, the Gaussian evaluations in (\ref{eq:h_lower}) and the determinants in (\ref{eq:h_upper}) reduce to sums of univariate quantities, making both bounds particularly efficient to evaluate.

\subsection{The MI-LB Acquisition Function}

We define the \textit{Mutual Information Lower Bound} (MI-LB) acquisition function as
\begin{equation}
    \boxed{
    \textit{MI-LB}(x) :=
    H_{\mathrm{lower}}(Y \mid x)
    -
    \sum_{z=1}^{n_{\mathrm{ens}}} w_z\, H_{\mathrm{upper}}(Y \mid x, Z=z),
    }
    \label{eq:gee}
\end{equation}
where $H_{\mathrm{lower}}(Y \mid x)$ is the lower bound (\ref{eq:h_lower}) applied to the $n_{\mathrm{ens}} \cdot K$-component marginal mixture (\ref{eq:marginal_gmm}) with component weights $\beta_{z,i}(x)$, means $\mu_i^{(z)}(x)$, and covariances $C_i^{(z)}(x)$; and $H_{\mathrm{upper}}(Y \mid x, Z=z)$ is the upper bound (\ref{eq:h_upper}) applied to the $K$-component conditional mixture (\ref{eq:per_z_gmm}) for each $z$. Explicitly,
\begin{align}
    \textit{MI-LB}(x)
    &= -\sum_{z=1}^{n_{\mathrm{ens}}} \sum_{i=1}^{K}
    \beta_{z,i} \log\!\left(
    \sum_{l=1}^{n_{\mathrm{ens}}} \sum_{j=1}^{K}
    \beta_{l,j}\,
    \mathcal{N}\!\left(\mu_i^{(z)};\, \mu_j^{(l)},\,
    C_i^{(z)} + C_j^{(l)}\right)
    \right) \nonumber \\
    &\quad -
    \sum_{z=1}^{n_{\mathrm{ens}}} w_z \sum_{i=1}^{K} \alpha_i^{(z)}
    \left(
    -\log \alpha_i^{(z)}
    + \frac{1}{2}\log\!\left((2\pi e)^N\, |C_i^{(z)}|\right)
    \right),
    \label{eq:gee_explicit}
\end{align}
where we suppress the dependence on $x$ for readability. The fundamental property of MI-LB is that it constitutes a certified lower bound on the true epistemic uncertainty $I(Y; Z \mid x)$, which we state formally as follows, with a proof given in Appendix~\ref{app:proof_gee}.

\begin{thm}[MI-LB is a lower bound on mutual information]
\label{thm:gee_lb}
Under the model assumptions of Section~\ref{sec:disentanglement}, with $Z$ supported on $\{1, \dots, n_{\mathrm{ens}}\}$ and $p_\theta(\cdot \mid x, z)$ a Gaussian mixture of the form (\ref{eq:per_z_gmm}) for each $z$, it holds that
\begin{equation}
    \textit{MI-LB}(x) \leq I(Y;\, Z \mid x)
    \qquad \text{for all } x \in \mathcal{X}.
\end{equation}
\end{thm}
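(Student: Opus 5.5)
The plan is to start from the exact split (\ref{eq:mi_split}), $I(Y;Z\mid x) = H(Y\mid x) - \sum_{z} w_z H(Y\mid x, Z=z)$, and bound the two terms in opposite directions. In the discrete-$Z$ Gaussian-mixture setting, the marginal $p_\theta(\cdot\mid x)$ is the $n_{\mathrm{ens}}K$-component mixture (\ref{eq:marginal_gmm}), and each conditional is the $K$-component mixture (\ref{eq:per_z_gmm}). All these differential entropies are finite, because Gaussian mixtures have finite second moments and bounded densities. Hence the split is a legitimate identity between finite numbers. I would note briefly that this follows from the chain rule $H(Y\mid x) = H(Y\mid x,Z) + I(Y;Z\mid x)$ for discrete $Z$ and continuous $Y$.

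\textbf{Step 1: lower bound on the marginal entropy.} I would prove $H(p)\ge H_{\mathrm{lower}}$ for an arbitrary mixture $p=\sum_i \pi_i \mathcal{N}(\mu_i,C_i)$, so as not to rely only on the citation. Write $H(p) = -\sum_i \pi_i \int \mathcal{N}(y;\mu_i,C_i)\log p(y)\,dy$. Jensen's inequality applied to the concave $\log$, under the probability measure $\mathcal{N}(\mu_i,C_i)$, gives $\int \mathcal{N}_i \log p \le \log \int \mathcal{N}_i\, p$. Expanding the product, $\int \mathcal{N}_i p = \sum_j \pi_j \int \mathcal{N}(y;\mu_i,C_i)\mathcal{N}(y;\mu_j,C_j)\,dy = \sum_j \pi_j \mathcal{N}(\mu_i;\mu_j,C_i+C_j)$, using the standard Gaussian product/convolution identity. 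This yields (\ref{eq:h_lower}). Applying it with weights $\beta_{z,i}$ gives $H(Y\mid x)\ge H_{\mathrm{lower}}(Y\mid x)$.

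\textbf{Step 2: upper bound on each conditional entropy.} Introduce the latent component label $J\in\{1,\dots,K\}$ with $P(J=i)=\alpha_i^{(z)}$ and $Y\mid J=i \sim \mathcal{N}(\mu_i^{(z)},C_i^{(z)})$. Then
\[
H(Y) \le H(Y,J) = H(J) + H(Y\mid J) = -\sum_i \alpha_i \log\alpha_i + \sum_i \alpha_i \tfrac12\log\bigl((2\pi e)^N|C_i|\bigr),
\]
which is (\ref{eq:h_upper}). The main obstacle is the first inequality, because it mixes a discrete and a continuous variable. I would prove it directly: $H(Y,J)-H(Y) = H(J\mid Y) = \mathbb{E}_Y\bigl[-\sum_i P(J=i\mid Y)\log P(J=i\mid Y)\bigr] \ge 0$, since posterior probabilities lie in $[0,1]$. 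Equivalently, $-\log p(y) \le -\log(\alpha_i \mathcal{N}_i(y))$ termwise, and integrating against $\alpha_i\mathcal{N}_i$ and summing gives the bound. Averaging over $z$ with weights $w_z\ge 0$ then gives $\sum_z w_z H(Y\mid x,Z=z)\le \sum_z w_z H_{\mathrm{upper}}(Y\mid x,Z=z)$.

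\textbf{Step 3: combine.} Subtracting the Step 2 inequality from the Step 1 inequality gives $I(Y;Z\mid x) \ge H_{\mathrm{lower}}(Y\mid x) - \sum_z w_z H_{\mathrm{upper}}(Y\mid x,Z=z) = \textit{MI-LB}(x)$, with the right-hand side as in (\ref{eq:gee}). Since $x$ was arbitrary, the bound holds for all $x\in\mathcal{X}$.
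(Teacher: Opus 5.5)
Your proof is correct and follows the paper's three-step argument: lower-bound the marginal entropy, upper-bound each per-member conditional entropy, and subtract using the exact split (\ref{eq:mi_split}). The only difference is that the paper simply cites the two Gaussian-mixture entropy bounds, whereas you derive them yourself (componentwise Jensen with the Gaussian product identity for $H_{\mathrm{lower}}$, and $H(J \mid Y) \ge 0$ for the latent component label for $H_{\mathrm{upper}}$) and also check that all entropies are finite so the split is a valid identity.
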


\noindent The practical consequence of Theorem~\ref{thm:gee_lb} is that maximizing MI-LB over a candidate pool is a conservative acquisition strategy: any point selected by MI-LB is guaranteed to have true epistemic uncertainty at least as large as the reported score. This mirrors the logic of the Evidence Lower BOund (ELBO) in variational inference~\cite{kingma2013auto}, where optimizing a lower bound on the log-evidence provides a principled surrogate without risking false certification of a poor solution. Consequently, the MI-LB acquisition rule selects
\begin{equation}
    x^* = \operatorname*{arg\,max}_{x\, \in\, \mathcal{X}_{\mathrm{pool}}}\;
    \textit{MI-LB}(x).
    \label{eq:gee_acquisition}
\end{equation}

\paragraph{Connection to BALD.} The Bayesian Active Learning by Disagreement (BALD) acquisition~\cite{BALD-2011} is defined identically to (\ref{eq:mi_split}) in the discrete classification setting, where $Y$ follows a categorical distribution and entropy is Shannon entropy, admitting exact computation. MI-LB can therefore be understood as a continuous, multivariate alternative to BALD for regression, with the \cite{huber2008} entropy bounds replacing exact entropy to restore tractability in the absence of a finite output alphabet.

\section{Experiments}
We benchmark MI-LB, the acquisition function proposed in Section~\ref{sec:gee}, against four baselines on three multimodal problems: \textbf{Random}, \textbf{Epistemic Variance}, \textbf{BAIT}~\cite{ash2021gone} (last-layer Fisher trace), and \textbf{Core-Set}~\cite{sener2018active} (k-Center-Greedy on the MDN backbone). SBAL~\cite{kirsch2022sbal} and MaxDist~\cite{holzmuller2023framework} batch variants of Variance and MI-LB are deferred to the appendix.
Among the three benchmarks, \S\ref{sec:exp_coupled_double_well} (the coupled double-well) is the discriminating one: input geometry $(\sigma, \kappa)$ does not encode the modal structure, which lives in output space via Kramers escape. \S\ref{sec:exp_multimodal} and \S\ref{sec:exp_ternary_phases} have phase boundaries in the input space and serve to characterise where geometric baselines remain competitive.

\begin{table}[htbp]
\centering
\small
\caption{Per-benchmark settings. All used a pool of $50{,}000$ inputs, $100$ initial labels, and an ensemble of size $8$.}
\label{tab:exp_protocol}
\begin{tabular}{lcccccc}
\toprule
Benchmark & $\dim x$ & $\dim y$ & $K_{\mathrm{MDN}}$ & rounds $\times$ batch & total labels & $\mathrm{NLL}^{*}$ \\
\midrule
Multimodal conditional(\S\ref{sec:exp_multimodal}) & $10$ & $16$ & $5$ & $20 \times 50$ & $1{,}100$ & $22.98$ \\
Coupled double-well (\S\ref{sec:exp_coupled_double_well}) & $7$ & $20$ & $8$ & $20 \times 50$ & $1{,}100$ & --- \\
Ternary phases (\S\ref{sec:exp_ternary_phases}) & $8$ & $1$ & $4$ & $30 \times 15$ & $550$ & $1.33$ \\
\bottomrule
\end{tabular}
\end{table}

\begin{figure}[htbp]
    \centering
    \begin{subfigure}[t]{0.32\textwidth}
        \includegraphics[width=\linewidth]{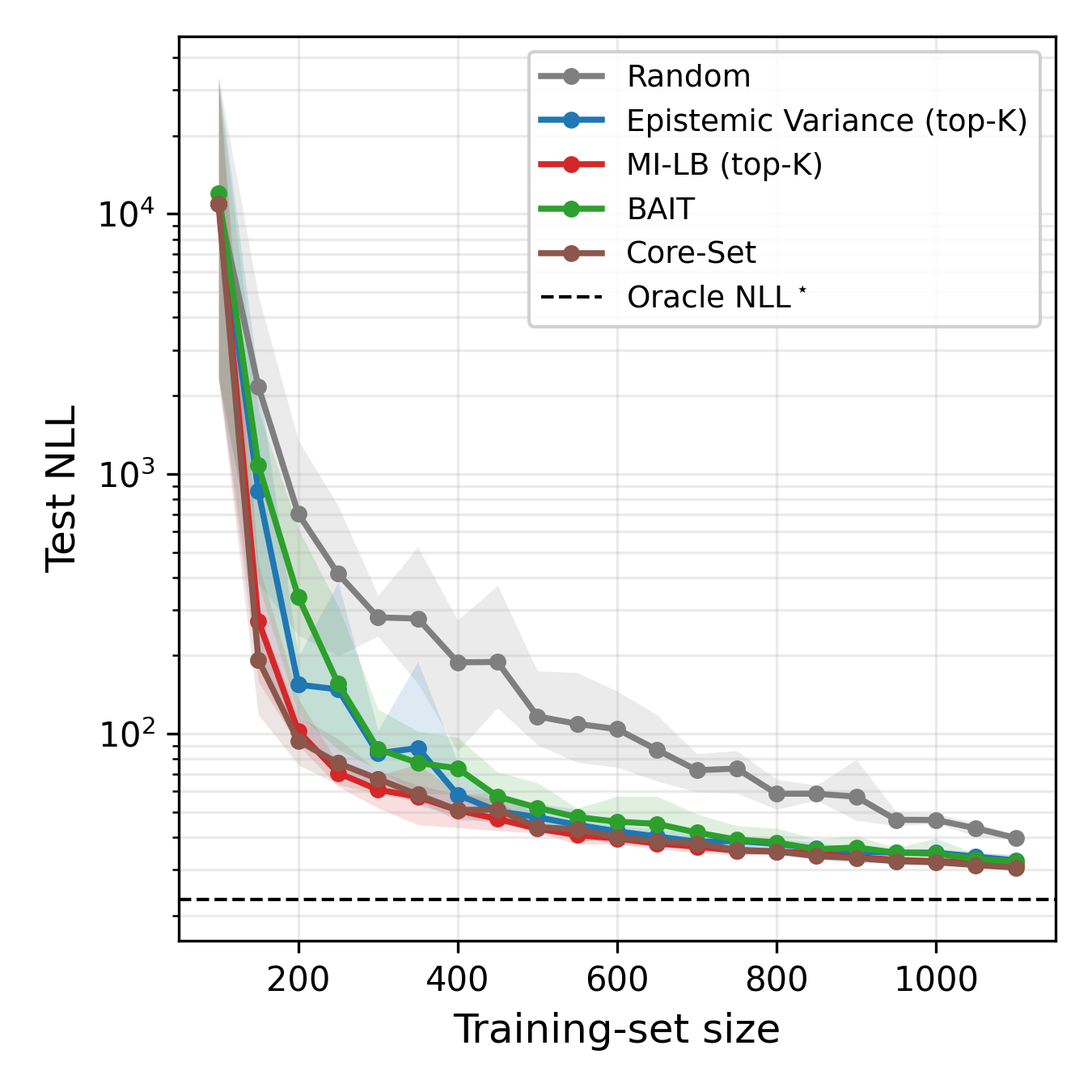}
        \caption{Multimodal conditional (\S\ref{sec:exp_multimodal})}
        \label{fig:multimodal_learning_curves}
    \end{subfigure}\hfill
    \begin{subfigure}[t]{0.32\textwidth}
        \includegraphics[width=\linewidth]{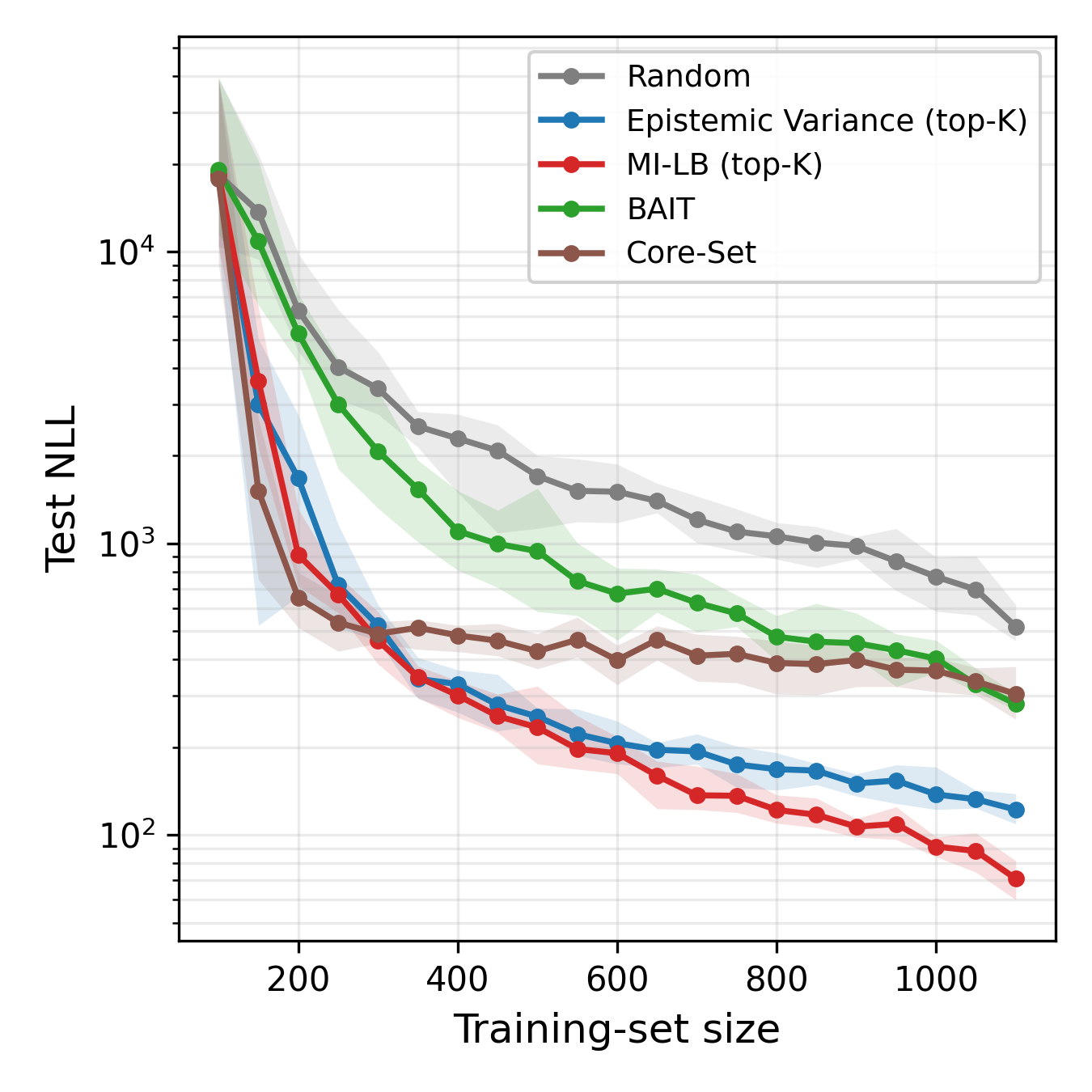}
        \caption{Coupled double-well (\S\ref{sec:exp_coupled_double_well})}
        \label{fig:dw_learning_curves}
    \end{subfigure}\hfill
    \begin{subfigure}[t]{0.32\textwidth}
        \includegraphics[width=\linewidth]{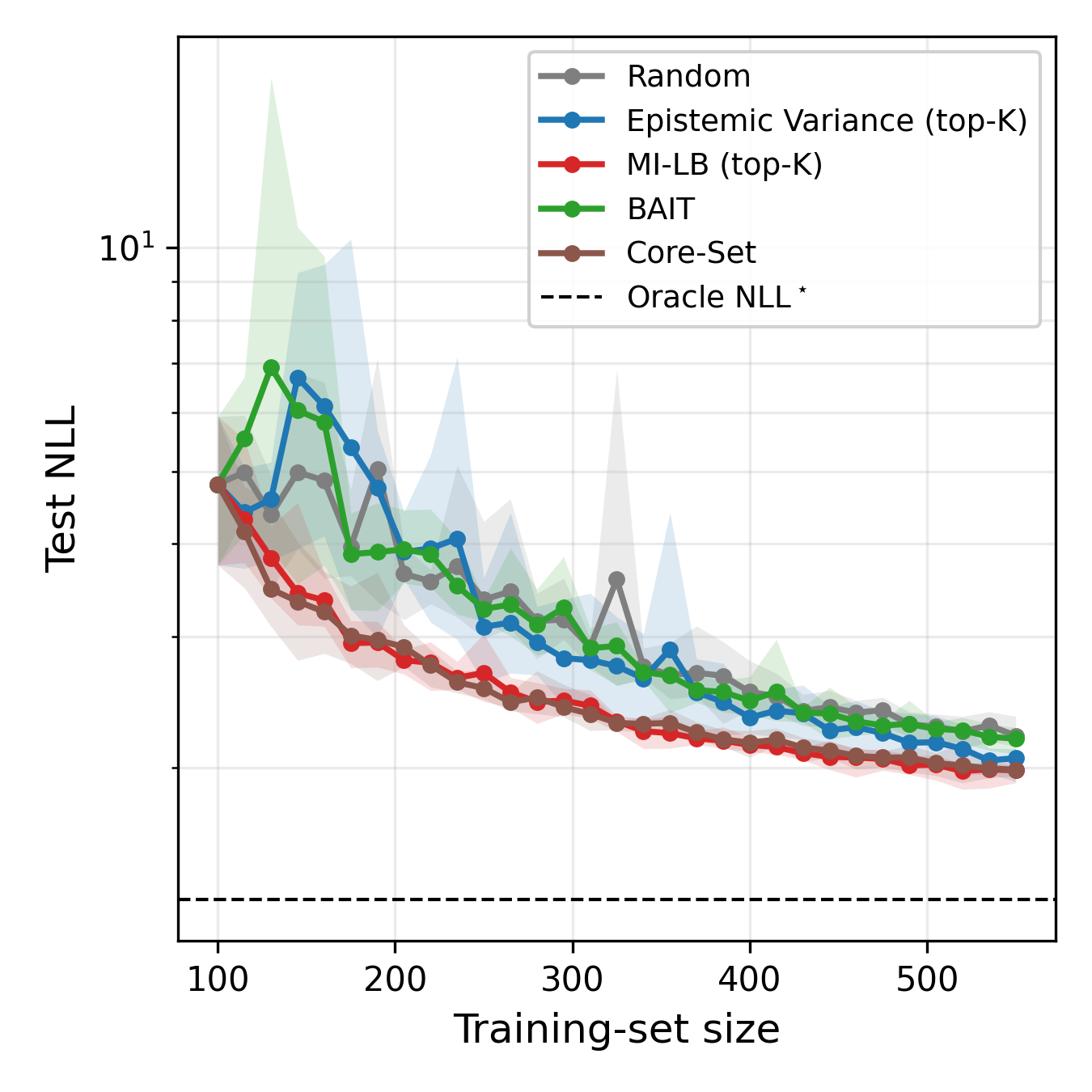}
        \caption{Ternary phases (\S\ref{sec:exp_ternary_phases})}
        \label{fig:ternary_learning_curves}
    \end{subfigure}
    \caption{Test NLL vs.\ training-set size on each benchmark for MI-LB against Random, Epistemic Variance, BAIT, and Core-Set in top-$k$ ($5$ seeds; bands min--max); dashed = oracle $\mathrm{NLL}^{*}$ where available. MI-LB matches or beats every baseline on every benchmark; geometric methods (Core-Set, BAIT) are competitive only when input geometry encodes the multimodality (a, c) and collapse when it lives in output space (b).}
    \label{fig:learning_curves}
\end{figure}

\subsection{Synthetic Multimodal Conditional Problem}
\label{sec:exp_multimodal}

We construct a synthetic conditional distribution $p^*(y \mid x)$ as a mixture
of $K{=}3$ Gaussians:
\begin{equation}
    p^*(y \mid x) = \sum_{k=1}^{K} \pi_k(x)\,
    \mathcal{N}\bigl(y;\, \mu_k(x),\, \Sigma_k(x)\bigr),
    \label{eq:gmm_ground_truth}
\end{equation}
where the mixing weights $\pi_k$, means $\mu_k$, and diagonal covariances $\Sigma_k$ are input-dependent functions (full specification in Appendix~\ref{app:experimental_details}). Inputs $x \in \mathbb{R}^{10}$ lie on a $4$-dimensional manifold embedded via $x = \tanh(Al + b)$, $l \sim \mathcal{N}(0, I_4)$, respecting the manifold hypothesis~\cite{cayton2005algorithms}, and outputs $y \in \mathbb{R}^{16}$. Because $p^*$ is known in closed form, the oracle $\mathrm{NLL}^*$ is computable exactly as a calibration reference. A $K{=}5$ MDN ensemble trained on the full pool reaches NLL $24.72$ vs.\ oracle $22.98$, while a $K{=}1$ MDN attains only $47.76$ -- the conditional is multimodal and a mixture head is necessary (Fig.~\ref{fig:mdn_baseline_samples}). The mixing weights $\pi_k(x)$ are gated by the radial coordinate $r = \|x_{1:L}\|$: for $r \lesssim r_0$ a single component dominates and $p^*(y\mid x)$ is effectively unimodal, while for $r \gtrsim r_0$ multiple components carry comparable mass and $p^*$ becomes genuinely multimodal, producing a sharp unimodal--multimodal transition at $r \approx r_0$ (full form in App.~\ref{app:exp_multimodal}).

\begin{figure}[htbp]
    \centering
    \includegraphics[width=0.8\textwidth]{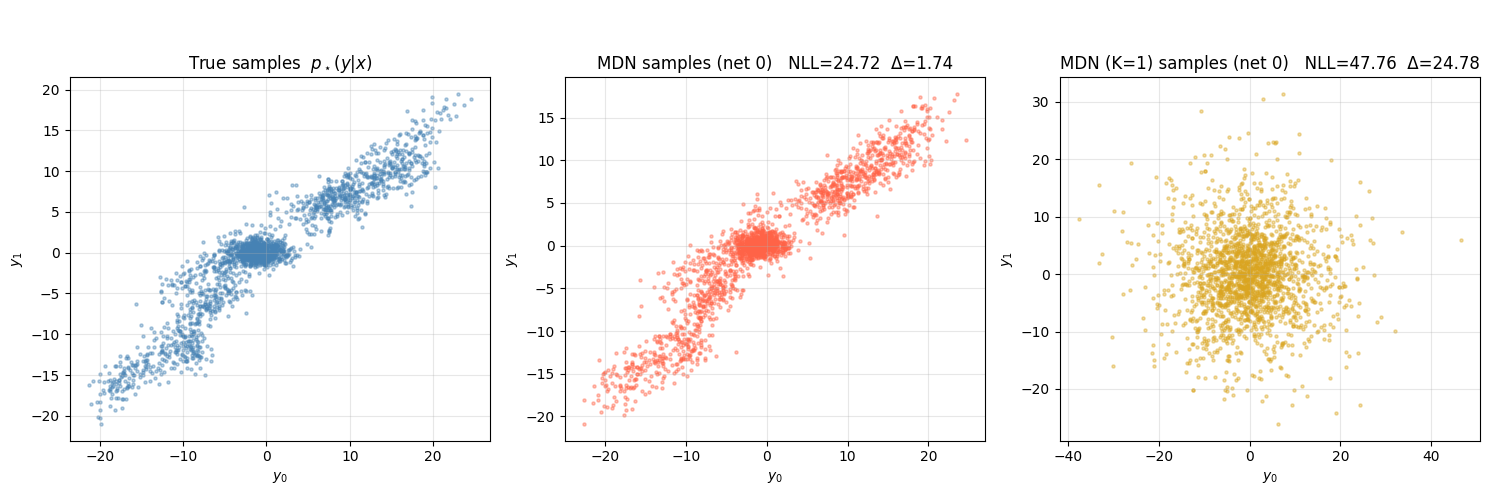}
    \caption{
        Predicted vs.\ true samples in the $(y_0, y_1)$ plane on held-out inputs. \textbf{Left:} draws from the oracle $p^*(y\mid x)$, displaying the multimodal structure of the target conditional. \textbf{Middle:} MDN ensemble samples; the recovered geometry closely matches the oracle, with calibration gap $\Delta = 1.74$. \textbf{Right:} single-Gaussian MDN samples collapses to an isotropic blob that cannot represent disjoint modes, yielding $\Delta = 24.78$.
    }
    \label{fig:mdn_baseline_samples}
\end{figure}

\paragraph{Why this is a useful benchmark.}
The coexistence of unimodal and multimodal regimes makes this problem a natural testbed for the Two-Index framework: $I(Y; Z \mid x)$ should be small in the interior, where all ensemble members agree on a single-component conditional, and large near the transition boundary $\|x_{1:L}\| \approx r_0$, where members must resolve the weights and locations of multiple components. A well-calibrated acquisition function should therefore concentrate queries on this boundary. As a lower bound on $I(Y; Z \mid x)$, MI-LB is designed to detect this distributional disagreement, while variance-based scores summarize ensemble disagreement through second moments alone and underweight regions where hypotheses differ in modal structure rather than spread.
 
\paragraph{Results.}
Figure~\ref{fig:multimodal_learning_curves} compares MI-LB against the four baselines. From the very first acquisition rounds MI-LB pulls clearly ahead of Random, Variance, and BAIT and tracks the strongest baseline (Core-Set) throughout the budget, ending within seed bands at $31.1 \pm 0.3$ vs.\ $30.6 \pm 0.6$ at $n = 1100$. MI-LB matches the best geometric baseline on this benchmark with the tightest seed-to-seed std among all methods, tying Core-Set under top-k and beating it under MaxDist batch selection (Appendix \ref{app:exp_multimodal}, Table \ref{tab:app_multimodal_final_nll}). Core-Set is competitive here for a benchmark-specific reason: the input manifold $x = \tanh(Al + b)$ is $4$-dimensional and the radial coordinate $\|x_{1:L}\|$ already encodes the unimodal--multimodal gate, so k-Center-Greedy in feature space implicitly samples the boundary that MI-LB targets explicitly. Sections~\ref{sec:exp_coupled_double_well}--\ref{sec:exp_ternary_phases} test this dependence and show that MI-LB retains its advantage when geometry no longer encodes multimodality, whereas Core-Set degrades sharply. Spatially, MI-LB concentrates on the unimodal--multimodal boundary, Variance is weaker on the same region, Core-Set covers the manifold uniformly, Random spreads everywhere (App.~\ref{app:exp_multimodal}, Fig.~\ref{fig:app_multimodal_scatter}).

\subsection{Coupled Double-Well System}
\label{sec:exp_coupled_double_well}

We consider a chain of $P = 5$ particles evolving in coupled one-dimensional
double-well potentials under overdamped Langevin dynamics,
\begin{equation}
    dq_i = \Bigl[\underbrace{a\,(q_i - q_i^3)}_{\text{double-well force}}
         + \underbrace{\kappa \sum_{j \in \mathrm{nn}(i)} (q_j - q_i)}_{\text{nearest-neighbour coupling}}\Bigr]\, dt
         + \sigma\, dW_i,
    \qquad i = 1, \dots, P,
    \label{eq:dw_sde}
\end{equation}
where $dW_i$ are independent Wiener increments and we fix $a = 1$. Inputs encode the initial particle configuration together with $(\sigma, \kappa)$; outputs stack particle positions at four snapshot times. Full configuration, integration scheme, and evaluation protocol in Appendix~\ref{app:exp_coupled_double_well}.

\paragraph{Why this is a useful benchmark.}
The conditional $p^*(y \mid x)$ undergoes a sharp change in mode structure controlled by $\sigma^2/a$: at low $\sigma$ each particle stays in its starting well ($p^*$ unimodal per particle); at high $\sigma$, noise triggers \emph{Kramers escape}~\cite{kramers1940brownian}, making the single-particle marginal bimodal. Coupling $\kappa$ raises the effective barrier for collective flips, coupling single-particle marginals into a joint distribution over $\{-1,+1\}^P$ basins. The Kramers exponent $a/(2\sigma^2)$ at barrier height $\Delta V = a/4$ places the crossover at $\sigma \gtrsim \sqrt{a/2} \approx 0.71$. A single-particle sweep (Fig.~\ref{fig:dw_phase_transition}) confirms this scaling, and at $\sigma \gtrsim 1$ noise overwhelms the barrier so that mass spreads beyond both wells into the $|q| > 1$ tails.

\begin{figure}[htbp]
    \centering
    \includegraphics[width=\textwidth]{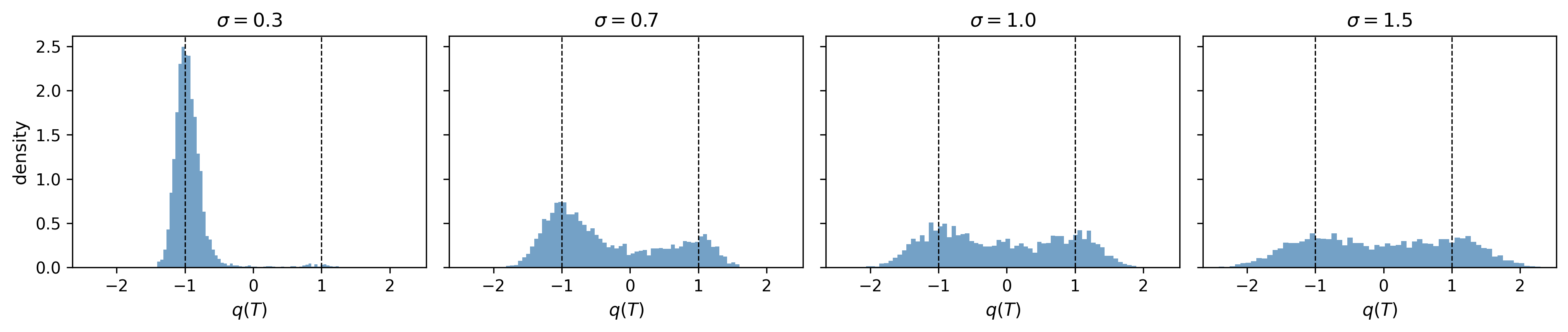}
    \caption{
        Terminal position $q(T)$ histograms for an uncoupled particle ($P=1$, $\kappa=0$, $q(0)=-0.5$) at four noise levels; dashed lines mark $q=\pm 1$. At $\sigma=0.3$ the particle stays trapped near $q=-1$; at $\sigma=0.7 \approx \sqrt{a/2}$ Kramers escape fills both wells; for $\sigma \geq 1$ noise dominates the barrier, spreading mass into the $|q|>1$ tails.
    }
    \label{fig:dw_phase_transition}
\end{figure}

With $\sigma$ and $\kappa$ as input coordinates, the phase boundary lives directly in input space, marking where informative acquisitions should concentrate. A Gaussian head must smear mass across both wells in the bimodal regime, inflating test NLL: a single-Gaussian ($K=1$) head pays a $5.51$-nat penalty against the $K=8$ mixture head used on this benchmark (Appendix~\ref{app:exp_coupled_double_well}, Fig.~\ref{fig:app_dw_mdn_baseline}). This penalty is a property of the benchmark, not of any acquisition strategy: any method built on a single-Gaussian head fails here by construction.

\paragraph{Results.}
Figure~\ref{fig:dw_learning_curves}: MI-LB wins decisively at $\mathbf{71 \pm 8}$ at $n = 1100$ - $1.7\times$ better than Variance ($122 \pm 11$), $4\times$ better than Core-Set ($304 \pm 55$) and BAIT ($281 \pm 15$), and $7\times$ better than Random ($518 \pm 60$). This is the discriminating benchmark: with the bimodal regime's mode structure living in \emph{output} space (Kramers escape between wells), two ensemble members can agree on $\mathbb{E}[Y \mid x]$ while disagreeing on whether mass concentrates near $q = +1$, $q = -1$, or both -- so feature-space coverage (Core-Set) and last-layer Fisher (BAIT) cannot rank candidates informatively, whereas MI-LB's entropy decomposition over the full output mixture targets exactly the disagreement that matters.

\subsection{Synthetic Phase-Competition Benchmark}
\label{sec:exp_ternary_phases}

Active learning is a standard tool in materials discovery, where labels are expensive and composition--process inputs live on low-dimensional manifolds~\cite{lookman2019active,xue2016accelerated,stach2021autonomous}. As in the previous benchmarks, discrete phases compete across the input space, producing boundary-localised multimodality. We build a synthetic CALPHAD-style~\cite{lukas2007calphad} benchmark that reproduces this structure.

Each input is a composition on the $3$-component simplex (with $x_C = 1 - x_A - x_B$) concatenated with $n_{\mathrm{proc}} = 6$ continuous process parameters. Per-phase Gibbs free energies $G_\phi(x_A, x_B, x_C)$ are random quadratic forms in composition (independent of $p$), and the latent phase distribution is $\mathrm{softmax}(-G_\phi / \tau_G)$ at temperature $\tau_G = 0.08$, producing sharp phase boundaries on the simplex slice. The scalar response is sampled from a per-phase Gaussian whose mean depends on both composition and process parameters; full simulator parameters in Appendix~\ref{app:exp_ternary_phases}.

\paragraph{Why this is a useful benchmark.}
The composition $(x_A, x_B, x_C)$ acts as a categorical latent selecting a phase, confining multimodality to a thin boundary region on the simplex; the process subspace contributes only a smooth, phase-conditional shift. This factorisation isolates a clean test of boundary localisation (realised phase masses and boundary fraction in Appendix~\ref{app:exp_ternary_phases}).

\paragraph{Results.}
The tighter budget than Sections~\ref{sec:exp_multimodal} and~\ref{sec:exp_coupled_double_well} reflects the $1$-D scalar output, on which NLL saturates faster than the $16$- and $20$-dimensional outputs of the other benchmarks. Figure~\ref{fig:ternary_learning_curves}: MI-LB reaches $\mathbf{1.99 \pm 0.04}$ at $n = 550$, tied with Core-Set ($1.99 \pm 0.04$); Variance ($2.06$), BAIT ($2.19$), and Random ($2.21$) trail. MaxDist on top of MI-LB narrowly wins overall ($1.95 \pm 0.06$). BAIT collapses to Random-level, the $1$-D scalar output makes its single-MC last-layer Fisher estimate too noisy to rank candidates. Core-Set's parity again reflects benchmark geometry (phase boundaries lie on the composition simplex) and Sections~\ref{sec:exp_coupled_double_well} shows it does not transfer when multimodality is in output space. MI-LB retains the tightest seed-to-seed spread in the data-scarce regime ($n \in [115, 250]$): $\sim\!2.6\times$ tighter than Random, $\sim\!4.5\times$ than Variance (Appendix Table~\ref{tab:app_ternary_seed_ranges}).

\section{Conclusion}

\paragraph{Summary. } We introduce the Two-Index generative framework to formally disentangle epistemic and aleatoric uncertainty across diverse model families. By leveraging this formalism, we derive the MI-LB acquisition function, providing a principled and computationally efficient Mutual Information lower bound for active learning in continuous, multimodal settings. Across three benchmarks MI-LB matches or beats every baseline we evaluate (Random, Variance, BAIT, Core-Set) and is the only method that does so consistently -- the geometric baselines are competitive only on benchmarks whose inputs already encode the multimodality. The discriminating test is the coupled double-well, where Kramers escape places the modal disagreement in output space: Core-Set and BAIT both collapse to within $1.7\times$ of Random, while MI-LB wins by $4\times$ -- the empirical signature that distributional acquisition is doing something feature-space coverage cannot.

\paragraph{Limitations. } MI-LB currently relies on the availability of explicit density components, such as those in Mixture Density Networks. Its performance is also tied to the quality of the entropy bounds, which may loosen in some settings. Furthermore, the framework assumes a well-specified model class capable of recovering the true irreducible stochasticity. Our empirical evaluation is also restricted to synthetic benchmarks with controllable multimodality, since no standard benchmarks exist for active learning in continuous multimodal regression; building such benchmarks is left to future work.

\paragraph{Future Work. } Future research may focus on extending the MI-LB objective to implicit generative models, such as diffusion and flow matching, where densities are not analytically tractable. Another promising direction is to explore the framework’s utility in safety-critical control tasks where distinguishing between lack of data and inherent noise is vital for risk-aware planning.

\begin{ack}
We would like to acknowledge support from the US Department of Energy under the Advanced Scientific Computing Research program (grant DE-SC0024563) and the US National Science Foundation (NSF) Soft AE Research Traineeship (NRT) Program (NSF grant 2152205). We also thank the developers of software that enabled this research, including JAX \cite{jax2018github}, Flax \cite{flax2020github} Matplotlib \cite{Hunter2007matplotlib} and NumPy \cite{harris2020numpy}.
\end{ack}

\section*{Author Contributions}
L.F.G. developed the theory for the two-index framework and MI-LB acquisition. A.K. developed the benchmarks, carried out experiments, and analyzed the data. P.P. provided funding and supervised this study. All authors helped to write and review the manuscript.

\section*{Impact Statement}
Our contributions enable advances in deep learning and uncertainty quantification. This has the potential to impact a wide range of downstream applications. While we do not anticipate specific negative impacts from this work, as with any powerful predictive tool, there is potential for misuse. We encourage the research community to consider the ethical implications and potential dual-use scenarios when applying these technologies in sensitive domains and to avoid its application altogether to weaponry and other military technologies.

\section*{Declaration of LLM Usage}
LLMs were used during the development of this paper, including for editing the writing, developing ideas, and as code assistants. We believe our use to be within the standard uses of this technology, and authors have verified the integrity of all material contained in this work.

\bibliographystyle{unsrt}
\bibliography{references}

\medskip


\clearpage

\appendix

\section{Mathematical Notation}
\label{app:mathematical_notation}

Table \ref{tab:math-notation} summarizes the symbols and notation used in this work.

For operands that involve expectations, such as expectation $\mathbb{E}$, variance $\mathrm{Var}$ and entropy $H$, a sub-index indicates what is the random variable for which the expectation is being taken over. When the context is clear, this sub-index is often omitted.

\begin{table}[ht]
\caption{Summary of the symbols and notation used in this paper.}
\label{tab:math-notation}
\begin{center}
        
\begin{tabular}{ll}\toprule
    \textbf{Symbol} & \textbf{Meaning} \\ \toprule
    $x \in \mathcal{X}$ & Input to the model\\ \midrule
    $Y \in \mathcal{Y}$ & Output random variable\\ \midrule
    $\theta \in \Theta$ & Parameters of a neural network\\ \midrule
    $Z \sim P_Z$ & Epistemic index over model hypotheses\\ \midrule
    $\epsilon \sim P_\epsilon$ & Aleatoric index for irreducible stochasticity\\ \midrule
    $g_\theta(x;\, z,\, \epsilon)$ & Learned generative map to the output space\\ \midrule
    $p_\theta(\cdot \mid x, z)$ & Aleatoric conditional distribution for fixed $z$ from eq. (\ref{eq:aleatoric_conditional})\\ \midrule
    $p_\theta(\cdot \mid x)$ & Predictive distribution, marginal over $P_Z$ from eq. (\ref{eq:predictive})\\ \midrule
    $p^*(\cdot \mid x)$ & True conditional distribution\\ \midrule
    $h_\# \mu$ & Push-forward of measure $\mu$ under map $h$ from eq. (\ref{eq:push-forward})\\ \midrule
    $\mathbb{E}$ & Expectation operator\\ \midrule
    $\mathrm{Var}(\cdot)$ & Variance operator\\ \midrule
    $H(\cdot)$ & Differential entropy\\ \midrule
    $I(Y;\, Z \mid x)$ & Epistemic mutual information\\ \midrule
    $\mathrm{KL}(p \,\|\, q)$ & KL divergence from $q$ to $p$\\ \midrule
    $\mathrm{TV}(p, q)$ & Total variation distance between $p$ and $q$\\ \midrule
    $\mathcal{N}(\mu, \Sigma)$ & Normal distribution with mean $\mu$ and covariance $\Sigma$\\\midrule
    $\mathcal{N}(y;\, \mu, \Sigma)$ & PDF of $\mathcal{N}(\mu, \Sigma)$ evaluated at $y$\\\midrule
    $n_{\mathrm{ens}}$ & Number of ensemble members\\ \midrule
    $K$ & Number of components in a Gaussian mixture\\ \midrule
    $\alpha_i^{(z)}(x)$ & Mixture weight of component $i$ for member $z$\\ \midrule
    $\mu_i^{(z)}(x)$ & Mean of component $i$ for member $z$\\ \midrule
    $C_i^{(z)}(x)$ & Covariance of component $i$ for member $z$\\ \midrule
    $w_z$ & Weight of ensemble member $z$ under $P_Z$\\ \midrule
    $\textit{MI-LB}(x)$ & Mutual Information Lower Bound acquisition function\\ \midrule
    $\mathcal{D}_n$ & Training dataset of size $n$\\ \midrule
    $\mI_d$ & Identity matrix of size $d$\\ \bottomrule
\end{tabular}

\end{center}
\end{table}

\section{Concrete Examples of the Two Index Framework}
\label{app:two_index_examples}

To ground the two-index framework in practice, we show how three families of models already in common use fit naturally into the formalism of equation (\ref{eq:generative}). In each case we identify the epistemic index $Z$, the aleatoric index $\epsilon$, the learned map $g_\theta$, and the resulting conditional and predictive distributions.

\paragraph{Ensemble of Conditional Variational Autoencoders (C-VAEs).}

Consider an ensemble of $n_{\mathrm{ens}}$ independently trained C-VAE  \cite{sohn2015CVAE}, indexed by $k \in \{1, \dots, n_{\mathrm{ens}}\}$. Each model $k$ consists of a decoder $d_{\theta_k} : \mathcal{X} \times \mathcal{L} \to \mathcal{Y}$, where $\mathcal{L}$ is the latent space. At inference time, a latent code is sampled from the prior $\epsilon \sim P_\epsilon := \mathcal{N}(0, I)$ and passed through the decoder to produce a prediction.

In the two-index framework, we set:
\begin{align*}
    Z &\sim P_Z := \mathrm{Uniform}\{1, \dots, n_{\mathrm{ens}}\}, \\
    \epsilon &\sim P_\epsilon := \mathcal{N}(0, I_d), \\
    g_\theta(x;\, z,\, \epsilon) &:= d_{\theta_z}(x,\, \epsilon),
\end{align*}
where $\theta = (\theta_1, \dots, \theta_{n_{\mathrm{ens}}})$ collects all decoder parameters. The epistemic index $Z = k$ selects a member of the ensemble, capturing uncertainty about which model best represents the data-generating process. The aleatoric index $\epsilon$ is the latent noise injected into the decoder of the selected member, capturing the intrinsic stochasticity of the output given a fixed hypothesis. The aleatoric conditional from eq. (\ref{eq:aleatoric_conditional}) becomes
\[
    p_\theta(\cdot \mid x, k) = (d_{\theta_k}(x, \cdot))_\# \mathcal{N}(0, I_d),
\]
which is the generative distribution of the $k$-th C-VAE. The predictive distribution~\eqref{eq:predictive} is the equally-weighted mixture of these per-member distributions:
\[
    p_\theta(\cdot \mid x) = \frac{1}{n_{\mathrm{ens}}} \sum_{k=1}^{n_{\mathrm{ens}}}
    p_\theta(\cdot \mid x, k).
\]
Epistemic uncertainty, measured by $I(Y; Z \mid x)$, is large when the ensemble members disagree on their generative distributions, and vanishes when all decoders produce the same output distribution regardless of which member is selected.

\paragraph{Ensemble of Conditional Flow Matching Models.}

Flow matching models \cite{lipman2022flow} learn a deterministic vector field that transports an initial noise sample $\epsilon \sim P_\epsilon$ to a target distribution over $\mathcal{Y}$, conditioned on input $x$. At inference time, the output is obtained by integrating the learned vector field from time $t=0$ to $t=1$, starting from $\epsilon$. An ensemble of $n_{\mathrm{ens}}$ such models captures epistemic uncertainty through disagreement across members.

The two-index instantiation mirrors the C-VAE case, with a key structural difference: the aleatoric index $\epsilon$ is not a latent code injected at an intermediate layer but rather the \emph{initial noise} supplied to the flow at inference time. Letting $\Phi_{\theta_k}(\cdot; x) : \mathcal{Y} \to \mathcal{Y}$ denote the learned flow map of member $k$ (i.e., the solution operator of the ODE from $t=0$ to $t=1$), we set:
\begin{align*}
    Z &\sim P_Z := \mathrm{Uniform}\{1, \dots, n_{\mathrm{ens}}\}, \\
    \epsilon &\sim P_\epsilon := \mathcal{N}(0, I_d), \\
    g_\theta(x;\, z,\, \epsilon) &:= \Phi_{\theta_z}(\epsilon;\, x).
\end{align*}
The aleatoric conditional is
\[
    p_\theta(\cdot \mid x, k) = (\Phi_{\theta_k}(\cdot\,; x))_\# \mathcal{N}(0, I_d),
\]
which, for an exactly learned flow, equals the target conditional distribution of member $k$.

\paragraph{Bayesian Neural Network for Classification.}

Bayesian Neural Networks (BNNs) are a special case of ENNs~\cite{epistemicNNs}, with the epistemic index $Z$ playing the role of a sample from the weight posterior $p(\theta \mid \mathcal{D})$. We instantiate this within the two-index framework for a classification setting with $n_{\mathrm{cls}}$ classes.

Let $f_w : \mathcal{X} \to \Delta^{n_{\mathrm{cls}}-1}$ be a neural network with weights $w$, mapping inputs to the probability simplex. In a BNN, weights are treated as random variables with posterior $p(w \mid \mathcal{D})$ approximated by a distribution $Q_\theta(w)$ with learnable parameters $\theta$ (e.g., a mean-field Gaussian). The epistemic index $Z$ is a draw from this approximate posterior. The aleatoric index $\epsilon$ serves as the source of randomness for sampling a class label from the predicted categorical distribution. Concretely:
\begin{align*}
    Z &\sim P_Z := Q_\theta(w), \\
    \epsilon &\sim P_\epsilon := \mathrm{Uniform}(0, 1), \\
    g_\theta(x;\, z,\, \epsilon) &:= \min\bigl\{c \in \{1, \dots, n_{\mathrm{cls}}\} : \bigl(F_z(x)\bigr)_c \geq \epsilon\bigr\},
\end{align*}
where $(F_z(x))_c := \sum_{j=1}^c [f_z(x)]_j$ is the $c$-th entry of the cumulative sum of the predicted class probabilities under weights $z$. This is the inverse CDF (quantile) transform: $g_\theta(x; z, \epsilon)$ samples a class label from the categorical distribution $\mathrm{Cat}(f_z(x))$ using $\epsilon$ as the uniform source of randomness. The aleatoric conditional is then
\[
    p_\theta(\cdot \mid x, z) = \mathrm{Cat}(f_z(x)),
\]
and the predictive distribution is equal to the Bayesian Model Average (BMA):
\[
    p_\theta(\cdot \mid x) = \mathbb{E}_{Z \sim Q_\theta}\bigl[\mathrm{Cat}(f_Z(x))\bigr].
\]
The epistemic term $I(Y; Z \mid x)$ measures disagreement among weight posterior samples about the predicted class probabilities, recovering the standard notion of epistemic uncertainty in BNN classification~\cite{epistemicNNs}. The aleatoric term $\mathbb{E}_Z[H(f_Z(x))]$ is the expected entropy of each categorical prediction, averaged over the weight posterior. It measures the label ambiguity that remains under a fixed weight configuration.

\paragraph{Epinet for Regression.}
The epinet \cite{epistemicNNs} augments a base network with a small auxiliary network that takes an epistemic index $z$ as an additional input, so that varying $z$ at a fixed input traces out a family of predictions from a single trained model. Paired with a Gaussian aleatoric head, an epinet captures both sources of uncertainty within one network: the epinet generates the epistemic spread, and the predicted variance captures the aleatoric stochasticity.

Let $\mu_\theta^{\mathrm{base}} : \mathcal{X} \to \mathbb{R}^N$ be the base mean network, $\eta_\theta : \mathcal{X} \times \mathcal{Z}_{\mathrm{epi}} \to \mathbb{R}^N$ be the epinet, and $\sigma_\theta^2 : \mathcal{X} \to \mathbb{R}_{>0}^N$ be the predicted aleatoric variance. The mean prediction at epistemic index $z$ is $\mu_\theta(x, z) := \mu_\theta^{\mathrm{base}}(x) + \eta_\theta(x, z)$. We set:
\begin{align*}
    Z &\sim P_Z := \mathcal{N}(0, I_{d_{\mathrm{epi}}}), \\
    \epsilon &\sim P_\epsilon := \mathcal{N}(0, I_N), \\
    g_\theta(x;\, z,\, \epsilon) &:= \mu_\theta(x,\, z) + \sigma_\theta(x) \odot \epsilon,
\end{align*}
where $\odot$ denotes element-wise multiplication. The aleatoric conditional is
\[
    p_\theta(\cdot \mid x, z) = \mathcal{N}\!\left(\mu_\theta(x, z),\, \mathrm{diag}\bigl(\sigma_\theta^2(x)\bigr)\right),
\]
a Gaussian whose mean depends on $z$ but whose covariance does not. The predictive distribution is the continuous mixture
\[
    p_\theta(\cdot \mid x) = \mathbb{E}_{Z \sim \mathcal{N}(0, I_{d_{\mathrm{epi}}})}\!\left[\mathcal{N}\!\left(\mu_\theta(x, Z),\, \mathrm{diag}\bigl(\sigma_\theta^2(x)\bigr)\right)\right],
\]
which is a Gaussian-mean mixture: the mean varies with $Z$ while the covariance is fixed at $\mathrm{diag}(\sigma_\theta^2(x))$. The epistemic term $I(Y; Z \mid x)$ measures how much the mean prediction $\mu_\theta(x, Z)$ varies with the epistemic index. The aleatoric term has the closed form $\mathbb{E}_Z[H_\epsilon(Y \mid x, Z)] = \frac{1}{2} \log\!\left((2 \pi e)^N \prod_{i=1}^N \sigma_{\theta, i}^2(x)\right)$, since the conditional covariance does not depend on $Z$. The two sources of uncertainty are therefore parameterized by disjoint sub-networks: the epinet $\eta_\theta$ governs the epistemic spread, and the variance head $\sigma_\theta$ governs the aleatoric noise. The same construction can be extended to the operator learning setting, where inputs and outputs are functions rather than vectors~\cite{guilhoto2024neon}.

\section{Limitations of Variance for Multimodal Distributions}
\label{app:variance_failure}

The Epistemic Variance acquisition function (Section~\ref{sec:var_decomp}), also known as Uncertainty Sampling, is the standard choice for active learning with real-valued predictions. There are settings, however, where it assigns identical scores to distributions with very different uncertainty profiles. We give a concrete example on the unit sphere that illustrates this failure mode.

For a distribution $p$ over $\R^m$ with mean $\bar{y} = \mathbb{E}_p[Y]$, we define the \textit{multivariate variance} as the trace of the covariance matrix:
\begin{equation}
    \mathrm{Var}(p) := \mathrm{tr}\bigl(\mathrm{Cov}_p(Y)\bigr) = \mathbb{E}_p\bigl[\|Y - \bar{y}\|_2^2\bigr].
    \label{eq:multivariate_variance}
\end{equation}
This is the expected squared $\ell_2$ distance from a sample to the mean, and it is the quantity that variance-based acquisition functions rank candidates by.

Consider the unit sphere $S^{m-1} := \{y \in \R^m : \|y\|_2 = 1\}$ and two inputs $x, x' \in \R^n$. For $x$, suppose $p_\theta(y \mid x)$ is the \textit{uniform distribution} on $S^{m-1}$. For $x'$, let $C_\delta(e_1)$ and $C_\delta(-e_1)$ denote the spherical caps of geodesic radius $\delta > 0$ centered at $e_1$ and $-e_1$ respectively, and define the \textit{polar distribution} $p_\theta(y \mid x')$ as the distribution that selects one of the two caps with equal probability and then draws uniformly within it. Both distributions are continuous with respect to the surface measure on $S^{m-1}$, both have mean $\bar{y} = 0$ by symmetry, and every sample lies on $S^{m-1}$. Since $\|Y\|_2^2 = 1$ almost surely and $\bar{y} = 0$ for both distributions, equation (\ref{eq:multivariate_variance}) gives $\mathrm{Var}(p) = 1$ in both cases, exactly and for all $m$ (see Figure~\ref{fig:variance-failure} for an illustration). Any acquisition function that ranks inputs by variance assigns the same score to $x$ and $x'$.

\begin{figure}[htbp]
    \centering
    \includegraphics[width=0.95\textwidth]{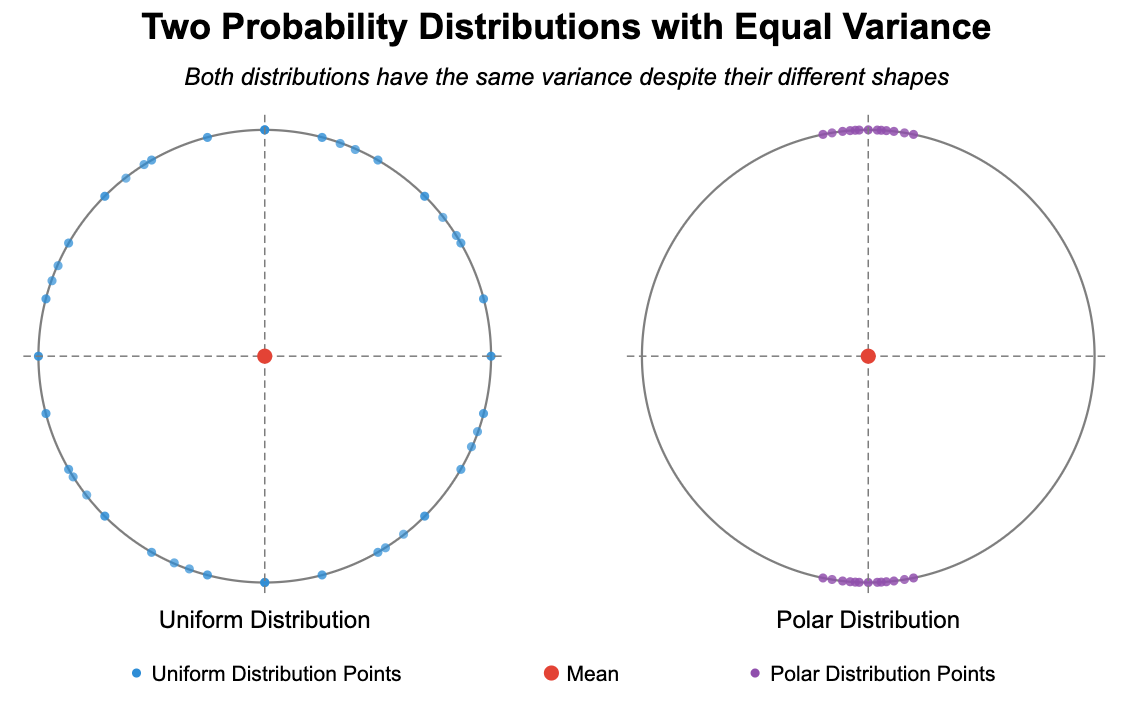}
    \caption{Two distributions on the unit circle with identical variance but different entropy. \textbf{Left:} the uniform distribution spreads samples evenly across the circle. \textbf{Right:} the polar distribution concentrates samples near two antipodal points. Both distributions have mean zero and multivariate variance equal to $1$, yet the uniform distribution has strictly higher entropy.}
    \label{fig:variance-failure}
\end{figure}

The two distributions are not equally uncertain, however. The polar distribution confines the output to two small patches of the sphere, while the uniform distribution spreads mass over the entire surface. The entropy-based decomposition of Section~\ref{sec:entropy_decomp} captures this gap. Writing $A_m = \mathrm{Area}(S^{m-1})$ for the total surface area and $A_\delta = \mathrm{Area}(C_\delta)$ for the area of each cap, the differential entropies with respect to the surface measure are
\begin{equation}
    h\bigl(p_\theta(\cdot \mid x)\bigr) = \log A_m, \qquad h\bigl(p_\theta(\cdot \mid x')\bigr) = \log 2 + \log A_\delta.
    \label{eq:cap_entropies}
\end{equation}
The entropy gap is $\log(A_m / (2 A_\delta))$, which grows without bound as $m \to \infty$ for any fixed cap radius $\delta$. This follows from concentration of measure on the sphere: the fraction $A_\delta / A_m$ of the sphere covered by a cap of fixed geodesic radius $\delta < \pi/2$ vanishes exponentially in $m$, because the mass of the integrand $\sin^{m-2}(\phi)$ in the surface area element concentrates near the equator $\phi = \pi/2$. Entropy-based acquisition functions detect this growing gap, while variance-based scores cannot.

\section{Proof of Asymptotic Guarantees}
\label{app:proof_mi_collapse}

In what follows, we formalize the insights from Section \ref{sec:asymptotics} and theorem \ref{thm:mi_collapse_informal} in order to provide guarantees on the asymptotic behavior under infinite data. First, we prove that epistemic uncertainty measured as the mutual information $I(Y;Z|x)$ converges to zero. Then, we show that the predicted aleatoric uncertainty $\mathbb{E}_Z\bigl[H_\epsilon(Y \mid Z, x,\, \mathcal{D}_n)\bigr]$ converges to the true quantity $H(p^*(\cdot \mid x))$. As a corollary of these two results, we also observe that the total predictive entropy also converges to the true entropy of the data-generating process.

\subsection{Mutual Information Collapse}

We begin by stating and proving a lemma that will be helpful in proving the main theorem.
 
\begin{lem}[KL divergence--total variation bound under bounded likelihood ratio]\label{lem:kl_tv}
Let $p$ and $q$ be probability measures on a measurable space $(\mathcal{Y}, \mathcal{B}(\mathcal{Y}))$ with $p \ll q$. Suppose there exists a constant $C \geq 1$ such that $dp/dq \leq C$ holds $q$-almost everywhere. Then
\[
    \mathrm{KL}(p \,\|\, q) \leq 2M_C \cdot \mathrm{TV}(p, q),
\]
where
\begin{equation}
    M_C := \max\!\left(1,\; \frac{C \log C - C + 1}{C - 1}\right) \label{eq:M_C_def}
\end{equation}
for $C > 1$, and $M_1 := 1$.
\end{lem}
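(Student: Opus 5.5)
The plan is to write everything in terms of the likelihood ratio $r := dp/dq$ and reduce the claim to a pointwise scalar inequality. By hypothesis, $0 \le r \le C$ holds $q$-almost everywhere and $\int r\,dq = 1$. I use the convention $\mathrm{TV}(p,q) = \sup_A |p(A)-q(A)| = \tfrac12 \int |r-1|\,dq$; this is what produces the factor $2$ in the statement.

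\textbf{Step 1: rewrite KL with a nonnegative integrand.} Since $\int (r-1)\,dq = 0$, I will write
\[
\mathrm{KL}(p\,\|\,q) = \int r\log r \, dq = \int \varphi(r)\, dq, \qquad \varphi(t) := t\log t - t + 1,
\]
with the convention $0\log 0 = 0$. The function $\varphi$ is convex on $[0,\infty)$, with $\varphi(1)=0$, $\varphi(0)=1$ and $\varphi \ge 0$.

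\textbf{Step 2: prove the pointwise bound $\varphi(t) \le M_C\,|t-1|$ for all $t\in[0,C]$.} I split into two ranges and use convexity (the chord bound) on each.
\begin{itemize}
\item For $t\in[0,1]$, write $t$ as a convex combination of $0$ and $1$. Convexity gives $\varphi(t) \le (1-t)\varphi(0) + t\varphi(1) = 1-t = |t-1|$.
\item For $t\in[1,C]$ with $C>1$, write $t$ as a convex combination of $1$ and $C$. Convexity gives $\varphi(t) \le \frac{t-1}{C-1}\varphi(C) = \frac{C\log C - C + 1}{C-1}\,|t-1|$.
\end{itemize}
Taking the larger of the two constants gives exactly $M_C$ as defined in (\ref{eq:M_C_def}).

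\textbf{Step 3: integrate.} Applying the pointwise bound at $t = r$ and integrating against $q$ gives
\[
\mathrm{KL}(p\,\|\,q) \le M_C \int |r-1|\,dq = 2M_C\,\mathrm{TV}(p,q).
\]

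\textbf{Step 4: the degenerate case $C=1$.} Here $r\le 1$ holds $q$-a.e. and $\int r\,dq=1$. Together these force $r=1$ $q$-a.e., so $p=q$ and both sides of the inequality vanish. The bound therefore holds with $M_1=1$.

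\textbf{Main obstacle.} There is no real obstacle here. The only step needing care is choosing the right chord in Step 2 so that the constant comes out as exactly $M_C$. One should also confirm the TV normalization convention, since the factor $2$ relies on $\mathrm{TV} = \tfrac12\|p-q\|_1$.
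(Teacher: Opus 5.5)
Your proposal is correct and follows the paper's proof essentially step for step. You rewrite $\mathrm{KL}(p\,\|\,q)=\int \varphi(dp/dq)\,dq$ with $\varphi(t)=t\log t-t+1$, bound $\varphi(t)\le M_C|t-1|$ on $[0,C]$ by splitting at $t=1$ and using the convexity chord through $1$ and $C$ on $[1,C]$, and then integrate using $\mathrm{TV}=\tfrac12\int|dp/dq-1|\,dq$. The only cosmetic differences are that on $[0,1]$ you use the chord from $0$ to $1$ where the paper uses $t\log t\le 0$ (both give $1-t$), and you treat $C=1$ separately (correctly noting that it forces $p=q$), whereas the paper's $[0,1]$ bound already covers that case.
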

 
\begin{proof}
Let $h := dp/dq$, so that $0 \leq h \leq C$ holds $q$-a.e. Since $\int h\,dq = 1$, the KL divergence can be written as
\begin{equation}
    \mathrm{KL}(p \,\|\, q) = \int h \log h\,dq = \int \phi(h)\,dq, \label{eq:kl_phi}
\end{equation}
where $\phi(t) := t\log t - t + 1$. The function $\phi$ is non-negative and convex on $[0, \infty)$ with $\phi(1) = 0$ and $\phi'(1) = 0$.
 
We claim that $\phi(t) \leq M_C\,|t - 1|$ for all $t \in [0, C]$. On the interval $[0, 1]$: since $t\log t \leq 0$ for $t \in [0,1]$, we have $\phi(t) = t\log t + (1 - t) \leq 1 - t = |t - 1|$. On the interval $[1, C]$: convexity of $\phi$ together with $\phi(1) = 0$ gives $\phi(t) \leq \phi(C)\cdot\frac{t - 1}{C - 1}$, where $\phi(C) = C\log C - C + 1$. Combining both cases yields $\phi(t) \leq M_C\,|t - 1|$ on $[0, C]$.
 
Integrating against $q$ and applying the bound gives
\begin{align}
    \mathrm{KL}(p \,\|\, q)
    = \int \phi(h)\,dq
    &\leq M_C \int |h - 1|\,dq \notag \\
    &= 2M_C \cdot \mathrm{TV}(p, q), \label{eq:kl_tv_final}
\end{align}
where the last equality uses $\mathrm{TV}(p, q) = \frac{1}{2}\int |dp - dq| = \frac{1}{2}\int |h - 1|\,dq$.
\end{proof}

Equipped with this lemma, we now state the first part of the theorem precisely.
 
\begin{thm}[Mutual Information Collapse]\label{thm:mi_collapse}
Let $(\mathcal{Y}, \mathcal{B}(\mathcal{Y}))$ be a measurable space, let $(\mathcal{Z}, P_Z)$ and $(\mathcal{E}, P_\epsilon)$ be probability spaces, let $\Theta \subseteq \mathbb{R}^d$, and let $g_\theta : \mathcal{X} \times \mathcal{Z} \times \mathcal{E} \to \mathcal{Y}$ be measurable for each $\theta \in \Theta$. Fix $x \in \mathcal{X}$, define the aleatoric conditional $p_\theta(\cdot \mid x, z) := (g_\theta(x; z, \cdot))_\# P_\epsilon$ and the predictive marginal $p_\theta(\cdot \mid x) := \int p_\theta(\cdot \mid x, z)\,dP_Z(z)$. Suppose the following conditions hold.
\begin{enumerate}
    \item[A1] \textbf{Well-specification.} There exists $\theta^* \in \Theta$ such that $p_{\theta^*}(\cdot \mid x, z) = p^*(\cdot \mid x)$ for $P_Z$-almost every $z$.
    \item[A2] \textbf{Consistency.} $\hat{\theta}_n \to \theta^*$ in probability as $n \to \infty$.
    \item[A3] \textbf{Uniform continuity in total variation.} The map $\theta \mapsto p_\theta(\cdot \mid x, z)$ is continuous at $\theta^*$ in total variation, uniformly in $z$: for every $\varepsilon > 0$ there exists $\delta > 0$ such that $\|\theta - \theta^*\| < \delta$ implies $\sup_{z \in \mathcal{Z}} \mathrm{TV}(p_\theta(\cdot \mid x, z), p_{\theta^*}(\cdot \mid x, z)) < \varepsilon$.
    \item[A4] \textbf{Bounded likelihood ratio.} There exists $C \geq 1$ such that for all $\theta$ in a neighborhood of $\theta^*$ and $P_Z$-almost every $z$, the aleatoric conditional is absolutely continuous with respect to the predictive marginal and
    \[
        \frac{dp_\theta(\cdot \mid x, z)}{dp_\theta(\cdot \mid x)} \leq C \qquad p_\theta(\cdot \mid x)\text{-a.e.}
    \]
\end{enumerate}
Then
\[
    I_{\hat{\theta}_n}(Y;\, Z \mid x) := \mathbb{E}_Z\!\left[\mathrm{KL}\!\left(p_{\hat{\theta}_n}(\cdot \mid x, Z) \,\Big\|\, p_{\hat{\theta}_n}(\cdot \mid x)\right)\right] \xrightarrow{p} 0 \quad \text{as } n \to \infty.
\]
\end{thm}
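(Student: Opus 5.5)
The plan is to bound the mutual information pointwise in $\theta$ by a quantity controlled by total variation distances to $p^*$, and then transfer convergence in probability through this deterministic bound. Fix $\theta$ in the neighborhood from A4. By Lemma~\ref{lem:kl_tv} applied with $p = p_\theta(\cdot \mid x, z)$ and $q = p_\theta(\cdot \mid x)$ (absolute continuity and the ratio bound $C$ are exactly A4), for $P_Z$-almost every $z$ we get
\[
    \mathrm{KL}\bigl(p_\theta(\cdot \mid x, z) \,\|\, p_\theta(\cdot \mid x)\bigr) \leq 2M_C\, \mathrm{TV}\bigl(p_\theta(\cdot \mid x, z),\, p_\theta(\cdot \mid x)\bigr).
\]
Integrating over $P_Z$ gives $I_\theta(Y; Z \mid x) \leq 2M_C\, \mathbb{E}_Z[\mathrm{TV}(p_\theta(\cdot\mid x,Z), p_\theta(\cdot\mid x))]$.

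Next I control the TV term with the triangle inequality through $p^* = p_{\theta^*}(\cdot\mid x, z)$, which holds for a.e.\ $z$ by A1. Write $\varepsilon(\theta) := \sup_z \mathrm{TV}(p_\theta(\cdot\mid x,z), p_{\theta^*}(\cdot\mid x,z))$. The first leg $\mathrm{TV}(p_\theta(\cdot\mid x,z), p^*)$ is at most $\varepsilon(\theta)$ for a.e.\ $z$. For the second leg, I use joint convexity of TV under mixtures: since $p_\theta(\cdot\mid x) = \int p_\theta(\cdot\mid x,z')\,dP_Z(z')$ and $p^* = \int p^*\,dP_Z(z')$, we get $\mathrm{TV}(p_\theta(\cdot\mid x), p^*) \leq \int \mathrm{TV}(p_\theta(\cdot\mid x,z'), p^*)\,dP_Z(z') \leq \varepsilon(\theta)$. (This can be checked directly via the set-supremum definition, because $|p_\theta(A\mid x) - p^*(A)| \le \int |p_\theta(A\mid x,z') - p^*(A)|\,dP_Z$.) Hence $I_\theta(Y;Z\mid x) \leq 4M_C\,\varepsilon(\theta)$ for every $\theta$ in the A4 neighborhood, with $M_C$ independent of $\theta$.

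Finally, I pass to $\hat\theta_n$. Given $\eta > 0$, A3 gives $\delta>0$, which I shrink so the $\delta$-ball also lies inside the A4 neighborhood, such that $\|\theta-\theta^*\|<\delta$ implies $\varepsilon(\theta) < \eta/(4M_C)$, and therefore $I_\theta < \eta$. Then $\Pr(I_{\hat\theta_n} \geq \eta) \leq \Pr(\|\hat\theta_n - \theta^*\| \geq \delta) \to 0$ by A2, which is the claim. The main obstacle is the middle step: ensuring that the marginal is also TV-close to $p^*$ uniformly. This is where the uniformity in $z$ from A3 is essential, together with the measurability of $z \mapsto p_\theta(A\mid x,z)$ needed to write the mixture inequality. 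I would handle the latter by noting it follows from the measurability of $g_\theta$ via Fubini on $\mathcal{Z}\times\mathcal{E}$. A minor care point is that A1 and A4 hold only $P_Z$-a.e., which is harmless since everything is integrated against $P_Z$.
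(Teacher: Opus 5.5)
Your proposal is correct and matches the paper's proof essentially step for step. Both use Lemma~\ref{lem:kl_tv} via A4, then the triangle inequality through $p_{\theta^*}(\cdot\mid x,z)=p_{\theta^*}(\cdot\mid x)=p^*(\cdot\mid x)$, then the uniform A3 bound on the conditional leg and joint convexity of TV on the marginal leg, with the same $\varepsilon/(4M_C)$ bookkeeping. The differences are cosmetic: you replace the continuous mapping theorem with the direct bound $\Pr(I_{\hat\theta_n}\ge\eta)\le\Pr(\|\hat\theta_n-\theta^*\|\ge\delta)$, and you explicitly shrink $\delta$ so the ball lies inside the A4 neighborhood, a detail the paper leaves implicit.
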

 
Before proving this theorem, we describe what assumptions A1--A4 intuitively mean. Assumption A1 relates to \textit{universality} of the chosen model class: the neural network architecture should be expressive enough to contain a parameter $\theta^*$ that captures the desired map, and is a standard assumption in the machine learning literature, having been proved for many choices of architectures via universal approximation theorems \cite{cybenko1989approximation, lu2020universal}. Assumption A2 relates to well-posedness of training: under infinite data, it should be possible to recover the true optimum of the problem. Assumption A3 is a regularity condition requiring that the aleatoric conditionals respond smoothly to parameter perturbations, uniformly across the epistemic index. Assumption A4 requires a uniform upper bound on the likelihood ratio $dp_\theta(\cdot \mid x, z)/dp_\theta(\cdot \mid x)$; for finite ensembles with prior weights $w_z > 0$, this is automatically satisfied with $C = 1/\min_z w_z$, since the marginal $p_\theta(\cdot \mid x) = \sum_z w_z\,p_\theta(\cdot \mid x, z) \geq w_z\,p_\theta(\cdot \mid x, z)$ dominates each conditional. For continuous epistemic indices, A4 is a mild regularity condition excluding pathological concentration of the conditional relative to the marginal.
\begin{proof}
Fix $x \in \mathcal{X}$ and write $I_\theta := I_\theta(Y; Z \mid x)$, $p_\theta(z) := p_\theta(\cdot \mid x, z)$, and $p_\theta := p_\theta(\cdot \mid x)$ for brevity. The strategy is to show that $\theta \mapsto I_\theta$ is continuous at $\theta^*$ with $I_{\theta^*} = 0$, after which the conclusion follows from A2 by the continuous mapping theorem.
 
By A1, $p_{\theta^*}(z) = p^*(\cdot \mid x)$ for $P_Z$-a.e.\ $z$, and marginalizing gives
\begin{align}
    p_{\theta^*}
    = \int p_{\theta^*}(z)\,dP_Z(z)
    &= \int p^*(\cdot \mid x)\,dP_Z(z) \notag \\
    &= p^*(\cdot \mid x), \label{eq:marginal_at_theta_star}
\end{align}
so that $p_{\theta^*}(z) = p_{\theta^*}$ for $P_Z$-a.e.\ $z$. Consequently $\mathrm{KL}(p_{\theta^*}(z) \,\|\, p_{\theta^*}) = 0$ for $P_Z$-a.e.\ $z$, and integrating against $P_Z$ yields $I_{\theta^*} = 0$.
 
For continuity at $\theta^*$, fix $\varepsilon > 0$. Since $p_\theta(\cdot \mid x)$ is a mixture over $P_Z$ with $p_\theta(\cdot \mid x, z) \ll p_\theta(\cdot \mid x)$, Lemma~\ref{lem:kl_tv} together with A4 gives, for every $\theta$ in the neighborhood specified by A4 and $P_Z$-a.e.\ $z$,
\begin{equation}
    \mathrm{KL}(p_\theta(z) \,\|\, p_\theta) \leq 2M_C \cdot \mathrm{TV}(p_\theta(z), p_\theta), \label{eq:kl_to_tv}
\end{equation}
where $M_C$ is the constant from~\eqref{eq:M_C_def}. It therefore suffices to bound $\mathbb{E}_Z[\mathrm{TV}(p_\theta(Z), p_\theta)]$. The triangle inequality for total variation, applied with $p_{\theta^*}(z)$ and $p_{\theta^*}$ as intermediate points, gives
\begin{align}
    \mathrm{TV}(p_\theta(z), p_\theta)
    \leq{}& \mathrm{TV}(p_\theta(z), p_{\theta^*}(z)) \notag \\
    &+ \mathrm{TV}(p_{\theta^*}(z), p_{\theta^*}) \notag \\
    &+ \mathrm{TV}(p_{\theta^*}, p_\theta), \label{eq:tv_triangle}
\end{align}
in which the middle term vanishes for $P_Z$-a.e.\ $z$ by \eqref{eq:marginal_at_theta_star}. By A3 there exists $\delta > 0$ such that $\|\theta - \theta^*\| < \delta$ implies
\begin{equation}
    \sup_{z \in \mathcal{Z}} \mathrm{TV}(p_\theta(z), p_{\theta^*}(z)) < \frac{\varepsilon}{4M_C}, \label{eq:a3_first_term}
\end{equation}
controlling the first term in \eqref{eq:tv_triangle} uniformly in $z$. For the third term, joint convexity of total variation together with the representation $p_\theta = \int p_\theta(z)\,dP_Z(z)$ yields
\begin{align}
    \mathrm{TV}(p_{\theta^*}, p_\theta)
    &\leq \int \mathrm{TV}(p_{\theta^*}(z), p_\theta(z))\,dP_Z(z) \notag \\
    &< \frac{\varepsilon}{4M_C}, \label{eq:third_term_bound}
\end{align}
where the strict bound again follows from \eqref{eq:a3_first_term} at the same $\delta$. Integrating \eqref{eq:tv_triangle} against $P_Z$ and substituting \eqref{eq:a3_first_term} and \eqref{eq:third_term_bound} gives
\begin{align}
    \mathbb{E}_Z[\mathrm{TV}(p_\theta(Z), p_\theta)]
    &< \frac{\varepsilon}{4M_C} + 0 + \frac{\varepsilon}{4M_C} \notag \\
    &= \frac{\varepsilon}{2M_C},
\end{align}
which combined with \eqref{eq:kl_to_tv} yields
\begin{align}
    I_\theta
    \leq 2M_C \cdot \mathbb{E}_Z[\mathrm{TV}(p_\theta(Z), p_\theta)]
    < 2M_C \cdot \frac{\varepsilon}{2M_C}
    = \varepsilon.
\end{align}
Hence $\theta \mapsto I_\theta$ is continuous at $\theta^*$.
 
Since $\hat{\theta}_n \xrightarrow{p} \theta^*$ by A2 and $I_\theta$ is continuous at $\theta^*$ with $I_{\theta^*} = 0$, the continuous mapping theorem gives $I_{\hat{\theta}_n} \xrightarrow{p} I_{\theta^*} = 0$.
\end{proof}

\subsection{Aleatoric Entropy Convergence}
\label{app:aleatoric_consistency}

Theorem~\ref{thm:mi_collapse} establishes that the epistemic mutual information collapses to zero. We now show that the aleatoric uncertainty estimator converges to the entropy of the true conditional. The argument follows the same three-step template as Theorem~\ref{thm:mi_collapse}: point evaluation at $\theta^*$, continuity at $\theta^*$, and the continuous mapping theorem. The new ingredient is a regularity condition that does not follow from A3 alone.

Assumption A3 controls $\theta \mapsto p_\theta(\cdot \mid x, z)$ in total variation, but differential entropy is not continuous in TV: distributions can be arbitrarily TV-close with entropies that differ by any prescribed amount. We therefore add the following assumption.
\begin{enumerate}
    \item[A5] \textbf{Uniform entropy continuity.} The map $\theta \mapsto H(p_\theta(\cdot \mid x, z))$ is continuous at $\theta^*$, uniformly in $z$: for every $\varepsilon > 0$ there exists $\delta > 0$ such that $\|\theta - \theta^*\| < \delta$ implies $\sup_{z \in \mathcal{Z}} \left|H(p_\theta(\cdot \mid x, z)) - H(p_{\theta^*}(\cdot \mid x, z))\right| < \varepsilon$.
\end{enumerate}

For Gaussian-mixture conditionals of the type used in Section~\ref{sec:exp_multimodal}, A5 follows from continuous parametrisation of the mixture weights, means, and covariances together with a uniform lower bound on component covariance eigenvalues in a neighborhood of $\theta^*$. For finite ensembles, the supremum over $z$ reduces to a maximum over finitely many continuous functions and is automatic.

\begin{thm}[Aleatoric Entropy Consistency]\label{thm:aleatoric_consistency}
Adopt the setup of Theorem~\ref{thm:mi_collapse} and suppose A1, A2, and A5 hold. Then
\[
    A_{\hat{\theta}_n}(x) := \mathbb{E}_Z\!\left[H_\epsilon\!\left(Y \mid Z, x;\, \hat{\theta}_n\right)\right] \xrightarrow{p} H(p^*(\cdot \mid x)) \quad \text{as } n \to \infty.
\]
\end{thm}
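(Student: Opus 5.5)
The plan follows the same three-step template as Theorem~\ref{thm:mi_collapse}: evaluate the aleatoric functional at $\theta^*$, show it is continuous at $\theta^*$, and conclude with the continuous mapping theorem. Throughout I fix $x$ and write
\[
A_\theta := \int H(p_\theta(\cdot \mid x, z))\,dP_Z(z).
\]
This is the quantity $A_{\hat\theta_n}(x)$ in the statement, evaluated at a generic parameter $\theta$.

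\textbf{Point evaluation.} By A1, $p_{\theta^*}(\cdot \mid x, z) = p^*(\cdot \mid x)$ for $P_Z$-almost every $z$. The integrand $z \mapsto H(p_{\theta^*}(\cdot \mid x, z))$ is therefore $P_Z$-a.e.\ equal to the constant $H(p^*(\cdot \mid x))$, and integrating against the probability measure $P_Z$ gives $A_{\theta^*} = H(p^*(\cdot \mid x))$. Implicitly this requires $H(p^*(\cdot \mid x))$ to be finite, which I take as part of the setup, since A5 speaks of finite entropy differences.

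\textbf{Continuity at $\theta^*$.} Fix $\varepsilon > 0$ and take the $\delta$ supplied by A5. For $\|\theta - \theta^*\| < \delta$, the integrand differences are bounded uniformly in $z$, so
\begin{align*}
|A_\theta - A_{\theta^*}|
&\le \int \bigl|H(p_\theta(\cdot \mid x, z)) - H(p_{\theta^*}(\cdot \mid x, z))\bigr|\,dP_Z(z) \\
&\le \sup_{z \in \mathcal{Z}} \bigl|H(p_\theta(\cdot \mid x, z)) - H(p_{\theta^*}(\cdot \mid x, z))\bigr| \\
&< \varepsilon .
\end{align*}
The only care needed is measurability and integrability of $z \mapsto H(p_\theta(\cdot \mid x, z))$ for $\theta$ near $\theta^*$. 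Integrability follows from the uniform bound: the integrand is a bounded perturbation of an a.e.\ constant finite function. Measurability of $z \mapsto H(p_\theta(\cdot \mid x, z))$ I will assume as part of the statement that $A_\theta$ is well defined. This is automatic for finite ensembles.

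\textbf{Conclusion.} Since $\hat\theta_n \xrightarrow{p} \theta^*$ by A2 and $\theta \mapsto A_\theta$ is continuous at $\theta^*$, the continuous mapping theorem in its version requiring continuity only at the limit point gives $A_{\hat\theta_n} \xrightarrow{p} A_{\theta^*} = H(p^*(\cdot \mid x))$. Concretely, for any $\varepsilon$ the event $\{|A_{\hat\theta_n} - A_{\theta^*}| \ge \varepsilon\}$ is contained in $\{\|\hat\theta_n - \theta^*\| \ge \delta\}$, whose probability tends to zero.

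The main obstacle is not the limit argument but ensuring $A_\theta$ is a well-defined finite quantity near $\theta^*$, that is, measurability and integrability in $z$. A5's uniformity is precisely what removes the need for a dominated-convergence argument, so the proof should be short once these technicalities are stated.
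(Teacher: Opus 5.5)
Your proposal is correct and follows the paper's proof essentially step for step: point evaluation $A_{\theta^*} = H(p^*(\cdot \mid x))$ via A1, continuity of $\theta \mapsto A_\theta$ at $\theta^*$ via the uniform-in-$z$ bound of A5, and the continuous mapping theorem under A2. The paper leaves implicit the finiteness of $H(p^*(\cdot \mid x))$ and the measurability and integrability of $z \mapsto H(p_\theta(\cdot \mid x, z))$, which you make explicit. Your event-containment argument is a valid direct unpacking of the final continuous-mapping step.
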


\begin{proof}
Fix $x \in \mathcal{X}$ and write $A_\theta := \mathbb{E}_Z[H(p_\theta(\cdot \mid x, Z))]$ and $p_\theta(z) := p_\theta(\cdot \mid x, z)$ for brevity. The strategy mirrors Theorem~\ref{thm:mi_collapse}: show that $\theta \mapsto A_\theta$ is continuous at $\theta^*$ with $A_{\theta^*} = H(p^*(\cdot \mid x))$, then apply the continuous mapping theorem under A2.

By A1, $p_{\theta^*}(z) = p^*(\cdot \mid x)$ for $P_Z$-a.e.\ $z$, so $H(p_{\theta^*}(z)) = H(p^*(\cdot \mid x))$ for $P_Z$-a.e.\ $z$. Integrating against $P_Z$ gives
\begin{equation}
    A_{\theta^*} = H(p^*(\cdot \mid x)). \label{eq:aleatoric_at_theta_star}
\end{equation}

For continuity at $\theta^*$, fix $\varepsilon > 0$. By A5 there exists $\delta > 0$ such that $\|\theta - \theta^*\| < \delta$ implies
\begin{equation}
    \sup_{z \in \mathcal{Z}} \left|H(p_\theta(z)) - H(p_{\theta^*}(z))\right| < \varepsilon. \label{eq:a5_uniform}
\end{equation}
Applying the triangle inequality for integrals followed by~\eqref{eq:a5_uniform} yields
\begin{align}
    |A_\theta - A_{\theta^*}|
    &\leq \int \left|H(p_\theta(z)) - H(p_{\theta^*}(z))\right|\,dP_Z(z) \notag \\
    &\leq \sup_{z \in \mathcal{Z}} \left|H(p_\theta(z)) - H(p_{\theta^*}(z))\right| \notag \\
    &< \varepsilon,
\end{align}
so $\theta \mapsto A_\theta$ is continuous at $\theta^*$.

Since $\hat{\theta}_n \xrightarrow{p} \theta^*$ by A2 and $\theta \mapsto A_\theta$ is continuous at $\theta^*$, the continuous mapping theorem combined with~\eqref{eq:aleatoric_at_theta_star} gives $A_{\hat{\theta}_n} \xrightarrow{p} H(p^*(\cdot \mid x))$.
\end{proof}

Theorems~\ref{thm:mi_collapse} and~\ref{thm:aleatoric_consistency} together imply that the total predictive entropy converges to the entropy of the true conditional, which gives a complete consistency picture for the entropy decomposition~\eqref{eq:entropy_decomp}.

\begin{cor}[Total Predictive Entropy Consistency]\label{cor:total_entropy}
Under A1--A5,
\[
    H_{Z,\epsilon}(Y \mid x;\, \hat{\theta}_n) \xrightarrow{p} H(p^*(\cdot \mid x)) \quad \text{as } n \to \infty.
\]
\end{cor}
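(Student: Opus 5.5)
The plan is to combine Theorem~\ref{thm:mi_collapse} and Theorem~\ref{thm:aleatoric_consistency} through the entropy decomposition (\ref{eq:entropy_decomp}). For each fixed $\theta$ (in particular $\theta = \hat{\theta}_n$), the chain rule gives the exact identity
\[
    H_{Z,\epsilon}(Y \mid x;\, \theta) = A_\theta(x) + I_\theta(Y;\, Z \mid x),
\]
where $A_\theta(x) = \mathbb{E}_Z[H(p_\theta(\cdot \mid x, Z))]$ and $I_\theta$ is written as $\mathbb{E}_Z[\mathrm{KL}(p_\theta(\cdot\mid x,Z)\,\|\,p_\theta(\cdot\mid x))]$. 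The first step is to verify this identity in the form used by the two theorems. Pointwise, $\log p_\theta(y\mid x) = \log p_\theta(y \mid x,z) - \log \frac{dp_\theta(\cdot\mid x,z)}{dp_\theta(\cdot\mid x)}(y)$. Integrating this first against $p_\theta(\cdot\mid x,z)$ and then against $P_Z$ gives $H(p_\theta(\cdot\mid x)) = \mathbb{E}_Z[H(p_\theta(\cdot\mid x,Z))] + \mathbb{E}_Z[\mathrm{KL}]$. This integration is justified once all terms are finite.

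The second step is to settle that finiteness. A4 guarantees that the density ratio is well-defined and bounded by $C$, so the KL integrand is bounded and $I_\theta \le \log C < \infty$ near $\theta^*$. A5, together with $H(p^*(\cdot\mid x))$ being finite (implicit in the statement of Theorem~\ref{thm:aleatoric_consistency}), makes $H(p_\theta(\cdot\mid x,z))$ finite and uniformly bounded in $z$ for $\theta$ near $\theta^*$. Hence the Fubini/splitting step is legitimate on the event $\{\|\hat\theta_n - \theta^*\| < \delta\}$, whose probability tends to $1$ by A2.

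The final step applies the limits. Theorem~\ref{thm:mi_collapse} gives $I_{\hat\theta_n} \xrightarrow{p} 0$, and Theorem~\ref{thm:aleatoric_consistency} gives $A_{\hat\theta_n} \xrightarrow{p} H(p^*(\cdot\mid x))$. Sums of sequences converging in probability converge in probability to the sum of the limits. On the high-probability event the identity holds, so $H_{Z,\epsilon}(Y\mid x;\hat\theta_n) \xrightarrow{p} H(p^*(\cdot\mid x))$.

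The only delicate point is the second step, namely ensuring that the decomposition is not of the form $\infty - \infty$, since differential entropy can be $\pm\infty$. I would handle it by restricting to the neighbourhood where A4 and A5 provide uniform control, noting that off this event nothing needs to be controlled because its probability vanishes.
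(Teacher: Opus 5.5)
Your proposal is correct and matches the paper's proof. Both write $H_{Z,\epsilon}(Y\mid x;\hat\theta_n) = A_{\hat\theta_n}(x) + I_{\hat\theta_n}(Y;Z\mid x)$ via the decomposition~(\ref{eq:entropy_decomp}), then combine Theorems~\ref{thm:aleatoric_consistency} and~\ref{thm:mi_collapse} using that convergence in probability is preserved under sums (the paper calls this the continuous mapping theorem); your extra check that the decomposition is not of the form $\infty-\infty$ on the event $\{\|\hat\theta_n-\theta^*\|<\delta\}$ goes beyond the paper, which simply takes the decomposition as an exact identity for every $\theta$.
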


\begin{proof}
By the entropy decomposition~\eqref{eq:entropy_decomp},
\[
    H_{Z,\epsilon}(Y \mid x;\, \hat{\theta}_n) = A_{\hat{\theta}_n}(x) + I_{\hat{\theta}_n}(Y;\, Z \mid x).
\]
Theorem~\ref{thm:aleatoric_consistency} gives $A_{\hat{\theta}_n}(x) \xrightarrow{p} H(p^*(\cdot \mid x))$ and Theorem~\ref{thm:mi_collapse} gives $I_{\hat{\theta}_n}(Y;\, Z \mid x) \xrightarrow{p} 0$. The continuous mapping theorem applied to the sum yields the result.
\end{proof}

\section{Proof of Theorem~\ref{thm:gee_lb}}
\label{app:proof_gee}

\begin{proof}
We wish to show that $\textit{MI-LB}(x) \leq I(Y; Z \mid x)$ for all $x \in \mathcal{X}$.
Recalling the decomposition from (\ref{eq:mi_split}),
\begin{equation}
    I(Y;\, Z \mid x) = H(Y \mid x) - \mathbb{E}_Z\bigl[H(Y \mid x, Z)\bigr].
    \label{eq:mi_app}
\end{equation}
We bound the two terms on the right-hand side in opposite directions using the results
of~\cite{huber2008}.

\medskip
\noindent\textbf{Step 1: Lower bounding $H(Y \mid x)$.}

The marginal predictive distribution $p_\theta(\cdot \mid x)$ is the Gaussian mixture
with $n_{\mathrm{ens}} \cdot K$ components given in~\eqref{eq:marginal_gmm}. By the
lower bound of~\cite{huber2008},
\begin{equation}
    H(Y \mid x) \geq H_{\mathrm{lower}}(Y \mid x),
    \label{eq:lb_marginal}
\end{equation}
where $H_{\mathrm{lower}}(Y \mid x)$ is~\eqref{eq:h_lower} applied to the marginal
mixture with weights $\beta_{z,i}$, means $\mu_i^{(z)}$, and covariances $C_i^{(z)}$.

\medskip
\noindent\textbf{Step 2: Upper bounding $\mathbb{E}_Z[H(Y \mid x, Z)]$.}

For each fixed $z \in \{1, \dots, n_{\mathrm{ens}}\}$, the aleatoric conditional
$p_\theta(\cdot \mid x, z)$ is a Gaussian mixture with $K$ components as
in~\eqref{eq:per_z_gmm}. By the upper bound of \cite{huber2008},
\begin{equation}
    H(Y \mid x, Z=z) \leq H_{\mathrm{upper}}(Y \mid x, Z=z)
    \qquad \text{for each } z.
\end{equation}
Multiplying by $w_z \geq 0$ and summing over $z$,
\begin{equation}
    \mathbb{E}_Z\bigl[H(Y \mid x, Z)\bigr]
    = \sum_{z=1}^{n_{\mathrm{ens}}} w_z\, H(Y \mid x, Z=z)
    \leq \sum_{z=1}^{n_{\mathrm{ens}}} w_z\, H_{\mathrm{upper}}(Y \mid x, Z=z).
    \label{eq:ub_aleatoric}
\end{equation}

\medskip
\noindent\textbf{Step 3: Combining the bounds.}

Substituting~\eqref{eq:lb_marginal} and~\eqref{eq:ub_aleatoric}
into~\eqref{eq:mi_app}, and using the fact that $f(a, b) = a - b$ is non-decreasing
in $a$ and non-increasing in $b$,
\begin{align}
    I(Y;\, Z \mid x)
    &= H(Y \mid x) - \mathbb{E}_Z\bigl[H(Y \mid x, Z)\bigr] \nonumber \\
    &\geq H_{\mathrm{lower}}(Y \mid x)
    - \sum_{z=1}^{n_{\mathrm{ens}}} w_z\, H_{\mathrm{upper}}(Y \mid x, Z=z)
    \;=\; \textit{MI-LB}(x). \qedhere
\end{align}

\end{proof}

\section{Experimental Details}
\label{app:experimental_details}

\subsection{Software}
Our code is implemented in JAX \cite{jax2018github} using the Flax \cite{flax2020github} and Optax \cite{deepmind2020jax-optax} libraries to define and train our neural networks. We used the \texttt{JaxMix} library \cite{guilhoto2026multimodalscientificlearningdiffusions} as a starting point for working with MDN ensembles. Our complete codebase is available at \url{https://github.com/PredictiveIntelligenceLab/JaxMix-AL}.

\subsection{Synthetic Multimodal Conditional Problem}
\label{app:exp_multimodal}

This appendix specifies the full data-generating process, model, trainer,
and active-learning protocol used in Section~\ref{sec:exp_multimodal}.
All hyperparameters listed here are the values used to produce
Figures~\ref{fig:multimodal_learning_curves} and~\ref{fig:app_multimodal_scatter}
and are exposed as defaults in
\texttt{examples/multimodal\_conditional/experiment\_config.py}.

\paragraph{Input manifold.}
Latent codes $l \in \mathbb{R}^{L}$ are drawn $l \sim \mathcal{N}(0, I_L)$
and embedded into the input space via
\[
    x = \tanh\!\bigl(A\, l + b_m\bigr),
    \qquad
    A \in \mathbb{R}^{D \times L},\;
    A_{ij} \stackrel{\mathrm{iid}}{\sim} \mathcal{N}\!\bigl(0, 1/L\bigr),
    \qquad
    b_m \sim \mathcal{N}(0, I_D).
\]
Both $A$ and $b_m$ are drawn once using
\texttt{manifold\_seed}\,$=1$ and held fixed across the benchmark; the
$\tanh$ bounds $x$ to $[-1, 1]^D$.

\paragraph{Random Fourier feature map.}
All mixture parameters are input-dependent through a fixed random Fourier
feature map
\[
    h(x) = \cos\!\bigl(\Omega\, x + \varphi\bigr) \in \mathbb{R}^{P},
    \qquad
    \Omega_{ij} \stackrel{\mathrm{iid}}{\sim} \mathcal{N}(0, 1/D),
    \quad
    \varphi_i \stackrel{\mathrm{iid}}{\sim} \mathrm{Uniform}(0, 2\pi).
\]
The parameters $(\Omega, \varphi)$ are drawn once with
\texttt{dist\_seed}\,$=42$ and held fixed.

\paragraph{Per-component means and variances.}
For each of the $K$ mixture components, uncentered means and log-variances are
\[
    \tilde{\mu}_k(x) = B_k\, h(x) + c_k,
    \qquad
    \log \Sigma_k(x) = C_k\, h(x) + b^{\mathrm{var}}_k,
\]
with $B_k, C_k \in \mathbb{R}^{M \times P}$ having i.i.d.\ $\mathcal{N}(0, 1/P)$
entries, $c_k \sim \mathcal{N}(0, c_{\mathrm{scale}}^2\, I_M)$, and
$b^{\mathrm{var}}_k = 0$. The per-component offsets $c_k$ control how far apart
the modes sit in output space. Component means are then centered to enforce
$\mathbb{E}[y \mid x] = 0$:
\[
    \mu_k(x)
    = \tilde{\mu}_k(x)
    - \sum_{j=1}^{K} \pi_j(x)\, \tilde{\mu}_j(x),
\]
and the covariance is diagonal,
$\Sigma_k(x) = \mathrm{diag}(\exp \log \Sigma_k(x))$. The variance-coupling
coefficient $\alpha$ in the code is set to $0$, so log-variances do not depend
on $\pi_k$.

\paragraph{Structured mixing weights.}
To create a clean unimodal/multimodal phase boundary, we use the
\texttt{structured} mixing mode. Letting $r = \|x_{1:L}\|$, define a radial
gate and $K{-}1$ angular scores,
\begin{align*}
    g(x) &= \tfrac{1}{2}\bigl(1 + \tanh\bigl(\beta\,(r - r_0)\bigr)\bigr), \\
    s(x) &= \mathrm{softmax}\!\bigl(\gamma\, V\, x_{1:L}\bigr) \in \Delta^{K-2},
    \qquad V \in \mathbb{R}^{(K-1) \times L},\; V_{k,:}/\|V_{k,:}\| = V_{k,:},
\end{align*}
where the rows of $V$ are unit vectors drawn from $\mathcal{N}(0, I_L)$ and
normalised, and then assemble the logits
\[
    \ell_0(x) = \mathrm{scale}\cdot(1 - g(x)),
    \qquad
    \ell_k(x) = \mathrm{scale}\cdot g(x)\cdot s_k(x)\;\; (k = 1, \dots, K{-}1),
\]
and set $\pi(x) = \mathrm{softmax}(\ell(x))$. Inside the radius ($r < r_0$)
the mass concentrates on component $0$, making $p^*(y \mid x)$ effectively
unimodal; outside the radius the angular scores activate components
$1, \dots, K{-}1$ in sectors.

\paragraph{Hyperparameters.}
Table~\ref{tab:app_multimodal_dist_hparams} lists the distribution
hyperparameters and Table~\ref{tab:app_multimodal_model_hparams} the model and
active-learning hyperparameters. All values match the defaults in
\texttt{experiment\_config.py}.

\begin{table}[htbp]
\centering
\caption{Distribution hyperparameters for the multimodal conditional
problem.}
\label{tab:app_multimodal_dist_hparams}
\begin{tabular}{lll}
\toprule
Symbol & Value & Meaning \\
\midrule
$D$ & $10$ & Input dimension \\
$M$ & $16$ & Output dimension \\
$L$ & $4$ & Latent manifold dimension \\
$K$ & $3$ & Number of true mixture components \\
$P$ & $128$ & Random Fourier features \\
$c_{\mathrm{scale}}$ & $10.0$ & Std.\ of per-component offsets $c_k$ \\
$\alpha$ & $0.0$ & Variance-coupling coefficient (disabled) \\
$\tau$ & $1.0$ & Softmax temperature (unused in structured mode) \\
$\beta$ & $8.0$ & Transition sharpness \\
$r_0$ & $1.3$ & Transition radius \\
$\gamma$ & $2.0$ & Angular sharpness \\
$\mathrm{scale}$ & $3.0$ & Logit magnitude \\
\texttt{dist\_seed} & $42$ & PRNG seed for distribution parameters \\
\texttt{manifold\_seed} & $1$ & PRNG seed for manifold map $(A, b_m)$ \\
\bottomrule
\end{tabular}
\end{table}

\paragraph{Oracle NLL.}
Because $p^*(y \mid x)$ is known in closed form, the oracle negative
log-likelihood
$\mathrm{NLL}^{*} = -\mathbb{E}_{(x, y) \sim p^*}[\log p^*(y \mid x)]$ can be
evaluated exactly on the test set. Under the settings above it equals
$\mathrm{NLL}^{*} = 22.98$ (computed by
\texttt{compute\_true\_nll} in
\texttt{examples/multimodal\_conditional/utils.py}).

\paragraph{Model architecture.}
Each ensemble member is a Mixture Density Network with shared backbone:
a $2$-layer MLP with $128$ hidden units per layer and GELU
activations~\cite{hendrycks2016gelu}. The MLP output is linearly projected to
$K_{\mathrm{MDN}} = 5$ mixture components, each with a mean in
$\mathbb{R}^{M}$ and a diagonal covariance; mixture weights come from a
softmax head. Ensemble size is $n_{\mathrm{ens}} = 8$, with members
initialised from different PRNG seeds and trained independently on the same
labeled set (the standard deep-ensemble construction used to instantiate
$P_Z = \mathrm{Uniform}\{1, \dots, n_{\mathrm{ens}}\}$).

\paragraph{Training schedule.}
All members are trained with AdamW (peak learning rate
$5\times 10^{-4}$, weight decay $10^{-2}$) preceded by adaptive
gradient clipping at $0.1$. The learning rate follows a linear warmup
over $\min(500,\, n_{\mathrm{iter}}/5)$ steps to the peak value, after
which it decays exponentially at rate $0.9$ per $2{,}000$ steps. The
number of gradient steps per active learning round is set adaptively
to
\[
    n_{\mathrm{iter}}(n_{\mathrm{lab}})
    = \min\!\bigl(10{,}000,\; 10 \cdot n_{\mathrm{lab}}\bigr),
\]
where $n_{\mathrm{lab}}$ is the current labeled-set size; this prevents
overfitting on small labeled sets in early rounds while allowing full
optimisation once $n_{\mathrm{lab}} \geq 1000$. The training mini-batch
size is $128$ (full-batch when $n_{\mathrm{lab}} < 128$). Optimiser,
architecture, and active-learning hyperparameters are fixed across all
acquisition functions; we do not perform per-method tuning.

\paragraph{Active-learning protocol.}
For each of $5$ seeds we instantiate a fresh pool of $50{,}000$ candidate
inputs, a held-out test set of $2{,}000$ inputs, and an initial labeled set
of $100$ inputs. Acquisition scores are evaluated on the remaining pool in
chunks of $256$. Each active-learning run performs $20$ rounds, acquiring
$50$ queries per round, so the final labeled set contains
$100 + 20 \cdot 50 = 1100$ examples, i.e.\ $2.2\%$ of the pool. Test NLL is
evaluated on the held-out set at the end of every round.

\begin{table}[htbp]
\centering
\caption{Model and active-learning hyperparameters for the multimodal
conditional problem.}
\label{tab:app_multimodal_model_hparams}
\begin{tabular}{lll}
\toprule
Symbol / option & Value & Meaning \\
\midrule
$n_{\mathrm{ens}}$ & $8$ & Ensemble size \\
$K_{\mathrm{MDN}}$ & $5$ & MDN mixture components per member \\
hidden features & $128$ & MLP width \\
depth & $2$ & Number of MLP hidden layers \\
activation & GELU & Nonlinearity \\
training batch size & $128$ & Mini-batch size for AdamW \\
peak learning rate & $5\times 10^{-4}$ & AdamW peak LR (after warmup) \\
weight decay & $10^{-2}$ & AdamW weight decay \\
gradient clip & $0.1$ & Adaptive gradient clipping threshold \\
$n_{\mathrm{iter}}$ (cap) & $10{,}000$ & Max gradient steps per round \\
iter-per-sample & $10$ & Slope of adaptive schedule \\
candidate pool size & $50{,}000$ & Unlabeled pool $|\mathcal{X}_{\mathrm{pool}}|$ \\
test set size & $2{,}000$ & Held-out evaluation set \\
initial labelled & $100$ & Initial labeled budget \\
AL rounds & $20$ & Number of acquisition rounds \\
query batch size & $50$ & Queries acquired per round \\
acquisition batch size & $256$ & Chunk size for scoring the pool \\
seeds & $\{0,1,2,3,4\}$ & Data / training seeds for reported curves \\
\bottomrule
\end{tabular}
\end{table}

\paragraph{Acquisition functions.}
Five scoring functions are evaluated. \textbf{Random} draws scores i.i.d.\ $\mathrm{Uniform}(0,1)$. \textbf{Epistemic Variance} uses the trace of the covariance of conditional means across ensemble members. \textbf{MI-LB} is~\eqref{eq:gee_explicit}. \textbf{BAIT}~\cite{ash2021gone} uses the paper's last-layer mean-head Fisher of ensemble member $0$ with one MC sample $y \sim p_\theta(\cdot \mid x)$ per pool point. \textbf{Core-Set}~\cite{sener2018active} runs k-Center-Greedy on the shared MDN backbone activations of ensemble member $0$ (the ``final FC layer'' recipe of \S 4.4). The latter two produce no scalar score and bypass the score-to-batch step.

\paragraph{Implementation of BAIT and Core-Set in the MDN setting.}
Both baselines are adapted from their original classification specifications. \textbf{BAIT} requires per-input Fisher embeddings $G(x)$ such that $F(x) = G(x)^\top G(x)$ approximates the per-sample Fisher information; following the last-layer recipe of \cite{ash2021gone} we restrict the Fisher to the mean-head weights of ensemble member $0$. Drawing one Monte-Carlo sample $y \sim p_\theta(\cdot \mid x)$ and using the closed-form MDN log-likelihood gradient $\nabla_{W_\mu} \log p(y \mid x; \theta) = \gamma_k(y, x)\,(y - \mu_k(x))/\sigma_k^2(x) \otimes z(x)$, where $\gamma_k$ are posterior mixture responsibilities and $z(x)$ is the shared backbone activation, the embedding is the flattened outer product, $G(x) \in \mathbb{R}^{1 \times hKd}$. Selection minimises $\operatorname{tr}((F_{\mathrm{train}} + \lambda I)^{-1} F_{\mathrm{cand}})$ via the forward$+$backward greedy of \cite{ash2021gone} with ridge $\lambda = 10^{-3}$ and the labelled-set Fisher as the burden matrix. This single-ensemble, single-MC choice is a compute trade-off rather than a capacity claim: extending the Fisher embedding to all $n_{\mathrm{ens}} = 8$ members (to capture cross-member disagreement and the mixture-weight signal the mean-head Fisher omits) or to multiple MC samples per pool point would multiply an already dominant selection overhead --- BAIT alone consumes $\sim\!12$ of the $\sim\!25$ GPU-h total compute budget (Table~\ref{tab:app_compute_runtimes}), pushing a richer embedding into a $100$+ GPU-h regime no other acquisition here requires. We therefore report the canonical last-layer recipe of~\cite{ash2021gone} and flag the resulting Fisher-estimate variance as a known limitation in the empirical discussion below. \textbf{Core-Set} uses the activations of the shared MDN backbone of ensemble member $0$ (the layer immediately before the mixture head) as $h$-dimensional features, then runs k-Center-Greedy~\cite{sener2018active} on Euclidean distances initialised at the labelled set. Both methods consume the full pool without subsampling.

\paragraph{Selection strategies.}
Three batch-selection rules convert per-point scores into a query batch of
size $50$:
\begin{itemize}
    \item \textbf{Top-$k$ (greedy).} Select the $50$ points with the highest
    acquisition scores. Used for all main-text results.
    \item \textbf{SBAL~\cite{kirsch2022sbal}.} We use the \emph{softmax}
    variant of SBAL (as opposed to the softrank and power variants in
    Kirsch et al.), which handles negative scores without modification:
    sample $50$ points without replacement from
    $\mathrm{softmax}(\mathrm{score}/T)$ via the Gumbel-top-$k$ trick.
    The temperature $T$ interpolates between exploitation ($T \to 0$,
    recovers top-$k$) and uniform exploration ($T \to \infty$, recovers
    Random). All SBAL runs reported here use $T = 1.0$.
    \item \textbf{MaxDist~\cite{holzmuller2023framework}.} Acquisition-weighted
    farthest-point sampling in feature space (LCMD-TP variant). Given input
    features $x$ standardised per dimension and normalised scores
    $\tilde{s} \in [0, 1]$, greedily pick the point maximising
    $d_{\min}(i)\cdot(1 + w\, \tilde{s}(i))$, where $d_{\min}(i)$ is the
    squared distance from point $i$ to its nearest already-selected or
    already-training point. All MaxDist runs use score-weight $w = 1$. At
    $w = 0$ this reduces to pure farthest-point sampling, and as
    $w \to \infty$ it approaches top-$k$ on $\tilde{s}$.
\end{itemize}
SBAL and MaxDist only apply to the deterministic acquisitions (Epistemic
Variance and MI-LB); combining either with Random is undefined (Random has no
meaningful score ranking or signal to weight against). Any acquisition of the
form \texttt{sbal\_X} or \texttt{maxdist\_X} in the experiment log corresponds
to applying the named selection strategy on top of base score $X$. BAIT and Core-Set bypass the score-to-batch step entirely (Fisher-trace forward+backward greedy and k-Center-Greedy respectively) and have no SBAL/MaxDist variants.

\paragraph{Results for SBAL and MaxDist variants.}
Figure~\ref{fig:app_multimodal_batch_variants_curves} and
Table~\ref{tab:app_multimodal_final_nll} report final test NLL at
$n = 1100$ across $5$ seeds. At $T = 1.0$, MI-LB (SBAL) sits roughly halfway
between MI-LB (top-$k$) and Random ($33.99$ vs $31.12$ and $39.73$), because
softmax sampling at $T = 1.0$ places non-trivial mass outside the
high-score region near the radial gate; lowering $T$ tightens the
distribution back onto MI-LB (top-$k$) and recovers it exactly as $T \to 0$.
MI-LB (MaxDist) at $w = 1$ matches MI-LB (top-$k$) ($30.49$ vs $31.12$),
adding batch diversity without sacrificing score quality; larger $w$
moves it toward top-$k$ on the score, smaller $w$ toward pure
farthest-point sampling.

\paragraph{BAIT and Core-Set.}
Core-Set ties MI-LB within seed-to-seed noise ($30.57 \pm 0.56$ vs $31.12 \pm 0.34$); MI-LB retains the tighter spread. The tie reflects benchmark geometry: pool inputs lie on a $4$-D tanh manifold, so k-Center-Greedy spreads queries across that manifold and ends up sampling the gate region that MI-LB targets through entropy disagreement. BAIT effectively ties Variance ($32.26 \pm 1.55$ vs $32.56 \pm 1.14$), but with the largest seed-to-seed spread of any acquisition we evaluate --- the variance of the single-MC Fisher estimate at small budgets. Both baselines bypass the Two-Index decomposition; the next two appendices test whether the apparent tie generalises beyond a benchmark in which input geometry already encodes the multimodality.

\begin{table}[htbp]
\centering
\small
\caption{Final test NLL at $n = 1100$, mean $\pm$ std across $5$ seeds
(min--max in brackets). SBAL uses temperature $T = 1.0$; MaxDist uses
score weight $w = 1$. Oracle $\mathrm{NLL}^\star = 22.98$.}
\label{tab:app_multimodal_final_nll}
\begin{tabular}{lcc}
\toprule
Acquisition & Mean $\pm$ std & Min -- Max \\
\midrule
Random                       & $39.73 \pm 0.90$ & $38.89$ -- $40.97$ \\
Epistemic Variance (top-$k$) & $32.56 \pm 1.14$ & $31.41$ -- $34.14$ \\
Epistemic Variance (SBAL)    & $31.01 \pm 0.57$ & $30.43$ -- $31.61$ \\
MI-LB (top-$k$)                & $31.12 \pm 0.34$ & $30.58$ -- $31.37$ \\
MI-LB (SBAL)                   & $33.99 \pm 0.74$ & $33.21$ -- $35.01$ \\
MI-LB (MaxDist)                & $30.49 \pm 0.46$ & $29.84$ -- $30.91$ \\
BAIT                         & $32.26 \pm 1.55$ & $30.40$ -- $34.34$ \\
Core-Set                     & $\mathbf{30.57 \pm 0.56}$ & $29.97$ -- $31.31$ \\

\bottomrule
\end{tabular}
\end{table}

\begin{figure}[htbp]
    \centering
    \includegraphics[width=0.5\textwidth]{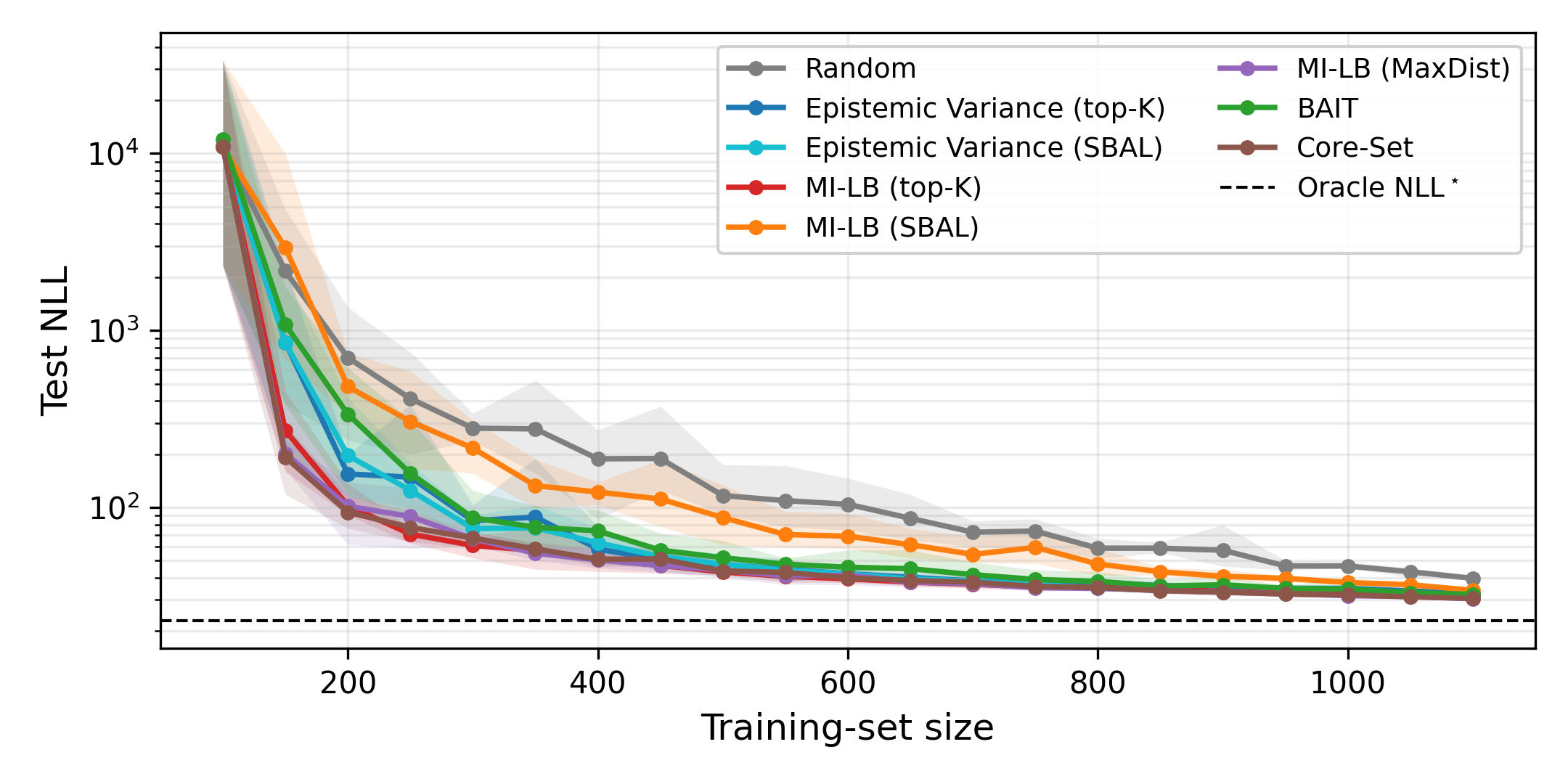}
    \caption{
    Learning curves on the multimodal benchmark for SBAL ($T=1.0$) and MaxDist ($w=1$) variants of Variance and MI-LB, plus the BAIT and Core-Set baselines, alongside the three top-$k$ curves from Fig.~\ref{fig:multimodal_learning_curves} ($5$ seeds; bands min--max).
    }
    \label{fig:app_multimodal_batch_variants_curves}
\end{figure}

\begin{figure}[htbp]
    \centering
    \includegraphics[width=\textwidth]{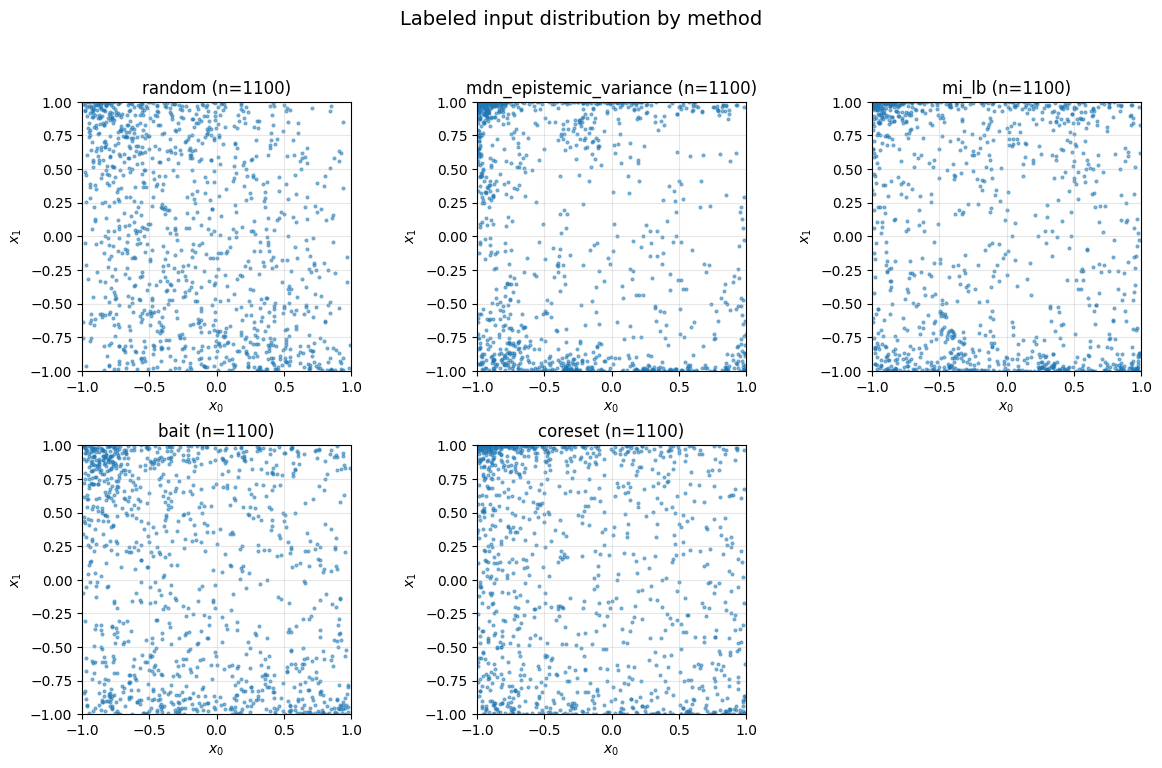}
    \caption{
Spatial distribution of all labeled inputs (initial + acquired) at
$n = 1100$ projected onto $(x_0, x_1)$ for the five base
acquisitions: Random, Epistemic Variance, and MI-LB
\textbf{(top row)}, and the BAIT and Core-Set baselines
\textbf{(bottom row)}. By the end of the budget the four informative
acquisitions look broadly similar --- all concentrate mass along the
multimodal edge of the pool ($x_0 \to -1$) and the upper/lower
boundary --- while Random remains the clear outlier, spreading
queries uniformly across the support. All five panels inherit the
same off-center pool support from the fixed bias $b$ in
$x = \tanh(Al + b)$ (drawn once at benchmark instantiation; see
\emph{Input manifold} above).
}
    \label{fig:app_multimodal_scatter}
\end{figure}

\subsection{Coupled Double-Well System}
\label{app:exp_coupled_double_well}

This appendix specifies the full data-generating process, model, trainer,
and active-learning protocol used in
Section~\ref{sec:exp_coupled_double_well}. 
\paragraph{Simulator.}
$P = 5$ particles evolve according to the overdamped Langevin
SDE~\eqref{eq:dw_sde} under open boundary conditions: particle $1$
couples only to particle $2$, particle $P$ only to particle $P{-}1$, and
interior particles to both neighbours. Barrier heights are uniform,
$a = 1$, giving per-particle potential $V(q) = q^4/4 - q^2/2$ with
minima at $q = \pm 1$ and barrier height $a/4 = 0.25$. We integrate
with the Euler--Maruyama scheme
\[
    q_i^{(t+dt)}
    = q_i^{(t)}
    + \bigl[a\,(q_i^{(t)} - (q_i^{(t)})^3)
        + \kappa \!\sum_{j \in \mathrm{nn}(i)}\! (q_j^{(t)} - q_i^{(t)})\bigr]\, dt
    + \sigma\, \sqrt{dt}\, \eta_i^{(t)},
    \qquad \eta_i^{(t)} \sim \mathcal{N}(0, 1),
\]
with $dt = 0.005$ and terminal time $T = 5.0$ (i.e.\ $n_{\mathrm{steps}}
= 1000$).

\paragraph{Inputs and outputs.}
Each trajectory is summarised by the $7$-dimensional input
$x = (q_1(0), \dots, q_5(0),\, \sigma,\, \kappa) \in \mathbb{R}^{P+2}$,
drawn component-wise from
\[
    q_i(0) \sim \mathrm{Uniform}(-1.5, 1.5),\quad
    \sigma \sim \mathrm{Uniform}(0.3, 2.0),\quad
    \kappa \sim \mathrm{Uniform}(0, 3.0).
\]
The output is the concatenation of $n_{\mathrm{snap}} = 4$ snapshots of
the particle positions evenly spaced in $[T/n_{\mathrm{snap}}, T]$ (i.e.\
at $t = 1.25, 2.5, 3.75, 5.0$), flattened into
$y \in \mathbb{R}^{n_{\mathrm{snap}} \cdot P} = \mathbb{R}^{20}$.
Multi-snapshot outputs give the model direct access to trajectory-level
structure and substantially improve training stability compared to
single-endpoint outputs, at no additional simulation cost.

\paragraph{Hyperparameters.}
Table~\ref{tab:app_dw_phys_hparams} lists the simulator and data
hyperparameters and Table~\ref{tab:app_dw_model_hparams} the model and
active-learning hyperparameters.

\begin{table}[htbp]
\centering
\caption{Simulator and data hyperparameters for the coupled double-well
benchmark.}
\label{tab:app_dw_phys_hparams}
\begin{tabular}{lll}
\toprule
Symbol / option & Value & Meaning \\
\midrule
$P$ & $5$ & Number of coupled particles \\
$a$ & $1$ (uniform) & Barrier-height coefficient \\
$T$ & $5.0$ & Terminal integration time \\
$dt$ & $0.005$ & Euler--Maruyama step ($1000$ steps) \\
$n_{\mathrm{snap}}$ & $4$ & Snapshots recorded per trajectory \\
boundary & open & Chain ends couple only to one neighbour \\
$q_i(0)$ range & $[-1.5,\, 1.5]$ & Initial-configuration prior \\
$\sigma$ range & $[0.3,\, 2.0]$ & Noise-intensity prior \\
$\kappa$ range & $[0,\, 3.0]$ & Coupling-strength prior \\
\bottomrule
\end{tabular}
\end{table}

\paragraph{Model architecture.}
Each ensemble member is an MDN with a $3$-hidden-layer MLP backbone,
$128$ hidden units per layer, GELU activations, and a final linear
projection to $K_{\mathrm{MDN}} = 8$ mixture components with diagonal
covariances over the $20$-dimensional output. The ensemble size is
$n_{\mathrm{ens}} = 8$, with members initialised from different PRNG
seeds and trained independently on the same labeled set.

\paragraph{Training schedule.}
All members are trained with AdamW using the same optimiser
configuration as in Appendix~\ref{app:exp_multimodal}: peak learning
rate $5\times 10^{-4}$, weight decay $10^{-2}$, linear warmup followed
by exponential decay at rate $0.9$ per $2{,}000$ steps, and adaptive
gradient clipping at $0.1$. The number of gradient steps per round is
set adaptively to $n_{\mathrm{iter}}(n_{\mathrm{lab}}) = \min(10{,}000,\;
10 \cdot n_{\mathrm{lab}})$, with mini-batch size $128$.

\paragraph{Active-learning protocol.}
For each of $5$ seeds we instantiate a fresh pool of $50{,}000$
candidate inputs, a held-out test set of $2{,}000$ inputs, and an
initial labeled set of $100$ inputs. Acquisition scores are evaluated
on the remaining pool in chunks of $256$. Each run performs $20$
rounds of $50$ queries each, so the final labeled set contains
$100 + 20 \cdot 50 = 1{,}100$ trajectories ($2.2\%$ of the pool). Test
NLL is evaluated at the end of every round. Seeds $\{0, 1, 2, 3, 4\}$
are used for all reported curves.

\begin{table}[htbp]
\centering
\caption{Model and active-learning hyperparameters for the coupled
double-well benchmark.}
\label{tab:app_dw_model_hparams}
\begin{tabular}{lll}
\toprule
Symbol / option & Value & Meaning \\
\midrule
$n_{\mathrm{ens}}$ & $8$ & Ensemble size \\
$K_{\mathrm{MDN}}$ & $8$ & MDN mixture components per member \\
hidden features & $128$ & MLP width \\
depth & $3$ & Number of MLP hidden layers \\
activation & GELU & Nonlinearity \\
training batch size & $128$ & Mini-batch size for AdamW \\
peak learning rate & $5\times 10^{-4}$ & AdamW peak LR (after warmup) \\
weight decay & $10^{-2}$ & AdamW weight decay \\
gradient clip & $0.1$ & Adaptive gradient clipping threshold \\
$n_{\mathrm{iter}}$ (cap) & $10{,}000$ & Max gradient steps per round \\
iter-per-sample & $10$ & Slope of adaptive schedule \\
candidate pool size & $50{,}000$ & Unlabeled pool $|\mathcal{X}_{\mathrm{pool}}|$ \\
test set size & $2{,}000$ & Held-out evaluation set \\
initial labelled & $100$ & Initial labeled budget \\
AL rounds & $20$ & Number of acquisition rounds \\
query batch size & $50$ & Queries acquired per round \\
acquisition batch size & $256$ & Chunk size for scoring the pool \\
seeds & $\{0,1,2,3,4\}$ & Data / training seeds \\
\bottomrule
\end{tabular}
\end{table}

\paragraph{Acquisition functions and selection strategies.}
Five base acquisitions, defined in Appendix~\ref{app:exp_multimodal} unchanged: Random, Variance, MI-LB, BAIT~\cite{ash2021gone}, Core-Set~\cite{sener2018active}. Selection strategies: top-$k$ for Random / Variance / MI-LB; SBAL ($T = 1.0$) for Variance and MI-LB; MaxDist ($w = 1$) for MI-LB. BAIT and Core-Set bypass the score-to-batch step entirely.
\paragraph{Results for SBAL and MaxDist variants.}
Figure~\ref{fig:app_dw_batch_variants_curves} and
Table~\ref{tab:app_dw_final_nll} report the full learning curves and
the final test NLL at $n = 1100$ across $5$ seeds. Two patterns are
worth highlighting.
First, MaxDist pairs well with MI-LB: MI-LB (MaxDist) at $w = 1$ finishes
at $131 \pm 15$, within a factor of two of MI-LB (top-$k$) and
comparable to Epistemic Variance (top-$k$). Acquisition-weighted
farthest-point sampling adds batch diversity without discarding the
informative ranking.
Second, SBAL severely degrades both base scores on this benchmark. MI-LB (SBAL) ends at $553 \pm 92$ and Epistemic Variance (SBAL) at $\mathbf{457 \pm 46}$, both within striking distance of the Random baseline ($518 \pm 60$) and roughly $4$--$8\times$ worse than their respective top-$k$ counterparts ($122$ and $71$). This is the opposite of the multimodal-conditional result in
Appendix~\ref{app:exp_multimodal}, where SBAL was competitive or
helpful, and it reflects a quantitative rather than a qualitative
difference: the total budget ($1{,}100$ points) is small relative to
the pool, so softmax sampling at $T = 1.0$ spreads enough mass onto
low-score points that the effective fraction of informative queries
collapses toward the Random baseline; annealing $T$ toward zero would
tighten the distribution onto the top-$k$ result and recover it in the
limit $T \to 0$. This is consistent with the original observations of
Kirsch et al.~\cite{kirsch2022sbal}: stochastic batch acquisition helps
when the top of the score ranking is corrupted by approximation noise
and diverse sampling hedges against that noise; it hurts when the top
ranking is already approximately correct and exploration trades
informative points for uninformative ones.

\paragraph{BAIT and Core-Set.}
Both geometric baselines fail decisively on this benchmark: Core-Set finishes at $304.3 \pm 54.8$ and BAIT at $281.3 \pm 15.1$, $\sim\!4\times$ worse than MI-LB ($70.8 \pm 8.0$) and within $1.7\times$ of Random ($518.1 \pm 60.1$). Mode structure here lives in \emph{output} space (Kramers escape between $q = \pm 1$): k-Center-Greedy spreads queries uniformly over $(\sigma, \kappa)$ regardless of where the bimodal regime sits, and the single-MC last-layer Fisher embedding is dominated by within-mode noise. Their joint collapse is the strongest evidence in the paper that MI-LB's advantage is not subsumed by feature-space coverage or last-layer information geometry.

\begin{table}[htbp]
\centering
\small
\caption{Final test NLL at $n = 1100$, mean $\pm$ std across $5$ seeds
(min--max in brackets) for the coupled double-well benchmark. All SBAL
runs use $T = 1.0$; MaxDist uses score weight $w = 1$.}
\label{tab:app_dw_final_nll}
\begin{tabular}{lcc}
\toprule
Acquisition & Mean $\pm$ std & Min -- Max \\
\midrule
Random                       & $518.1 \pm 60.1$ & $462.6$ -- $616.2$ \\
Epistemic Variance (top-$k$) & $122.1 \pm 11.3$ & $109.1$ -- $138.5$ \\
Epistemic Variance (SBAL)    & $456.8 \pm 45.7$ & $392.9$ -- $515.3$ \\
MI-LB (top-$k$)                & $\mathbf{70.8 \pm 8.0}$  & $59.9$  -- $81.6$  \\
MI-LB (SBAL)                   & $553.0 \pm 91.9$ & $423.5$ -- $630.2$ \\
MI-LB (MaxDist)                & $130.7 \pm 14.9$ & $113.5$ -- $147.0$ \\
BAIT                         & $281.3 \pm 15.1$ & $268.8$ -- $305.3$ \\
Core-Set                     & $304.3 \pm 54.8$ & $249.3$ -- $377.5$ \\
\bottomrule
\end{tabular}
\end{table}

\begin{figure}[htbp]
    \centering
    \includegraphics[width=0.85\textwidth]{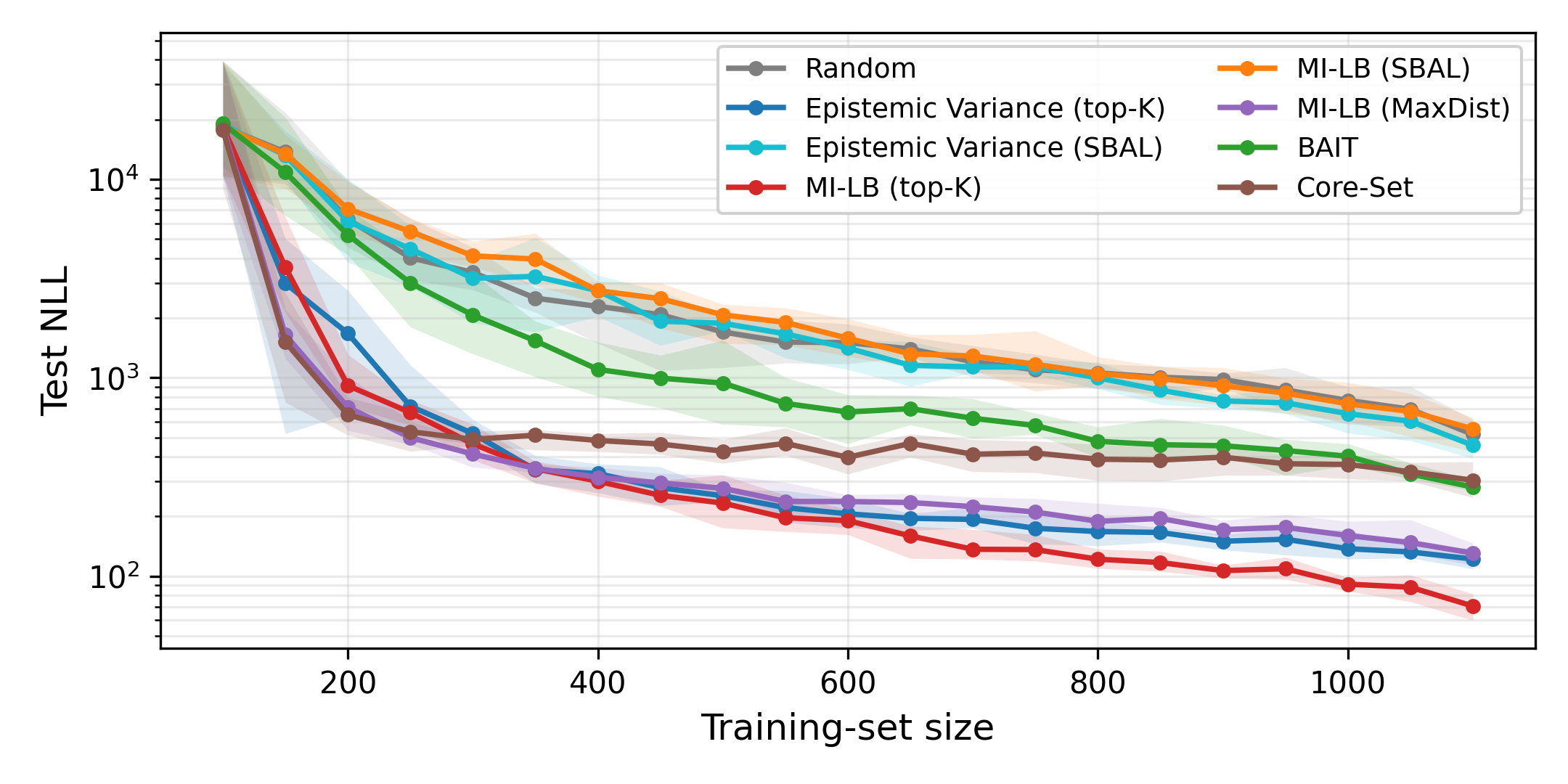}
    \caption{
    Learning curves for all eight (acquisition, selection-strategy) combinations on the coupled double-well benchmark including the BAIT and Core-Set baselines ($5$ seeds; bands min--max). MaxDist-MI-LB stays within $\sim\!2\times$ of top-$k$ MI-LB; both SBAL variants and the two geometric baselines (BAIT, Core-Set) collapse toward the Random regime.
    }
    \label{fig:app_dw_batch_variants_curves}
\end{figure}

\paragraph{Why a mixture head: $K = 1$ vs $K = 8$.}
To justify the choice of a mixture head
($K_{\mathrm{MDN}} = 8$) over a single-Gaussian head ($K = 1$), we
train both with the identical architecture used in the AL experiments
(depth-$3$ MLP, $128$ hidden units, $n_{\mathrm{ens}} = 8$) on a
larger offline dataset of $200{,}100$ trajectories drawn from the same
input distribution as the active-learning experiments, and evaluate
both on a held-out test set of $5{,}000$ trajectories. The
ensemble-averaged test NLL is $\mathbf{15.90}$ for $K = 8$ versus
$\mathbf{21.41}$ for $K = 1$, a gap of $5.51$ nats on the $20$-%
dimensional output (roughly $0.28$ nats per output dimension). Absolute
NLL values here are below those in the AL experiments because training
uses $200{,}100$ examples rather than the $1{,}100$ queried by the AL
loop; the quantity of interest is the $K = 8$-vs-$K = 1$ gap, which is
driven by the head structure, not the training-set size. The $K = 1$
model must place a single Gaussian per input, so wherever the
conditional is bimodal it is forced to smear mass between the wells
and pay an unavoidable log-likelihood penalty; the $K = 8$ mixture can
allocate one component per well.
Figure~\ref{fig:app_dw_mdn_baseline} shows the qualitative picture on
the test distribution: ground-truth samples of $(q_0(T), q_1(T))$
concentrate near $(\pm 1, \pm 1)$ in the bimodal regime and the $K = 8$
MDN recovers the same multi-well envelope, while the $K = 1$ model
collapses to a single broad ellipse near the origin. This is the
failure mode anticipated in
Section~\ref{sec:exp_coupled_double_well} and the quantitative reason
the active-learning experiments on this benchmark use
$K_{\mathrm{MDN}} = 8$ throughout.

\begin{figure}[htbp]
    \centering
    \includegraphics[width=0.95\textwidth]{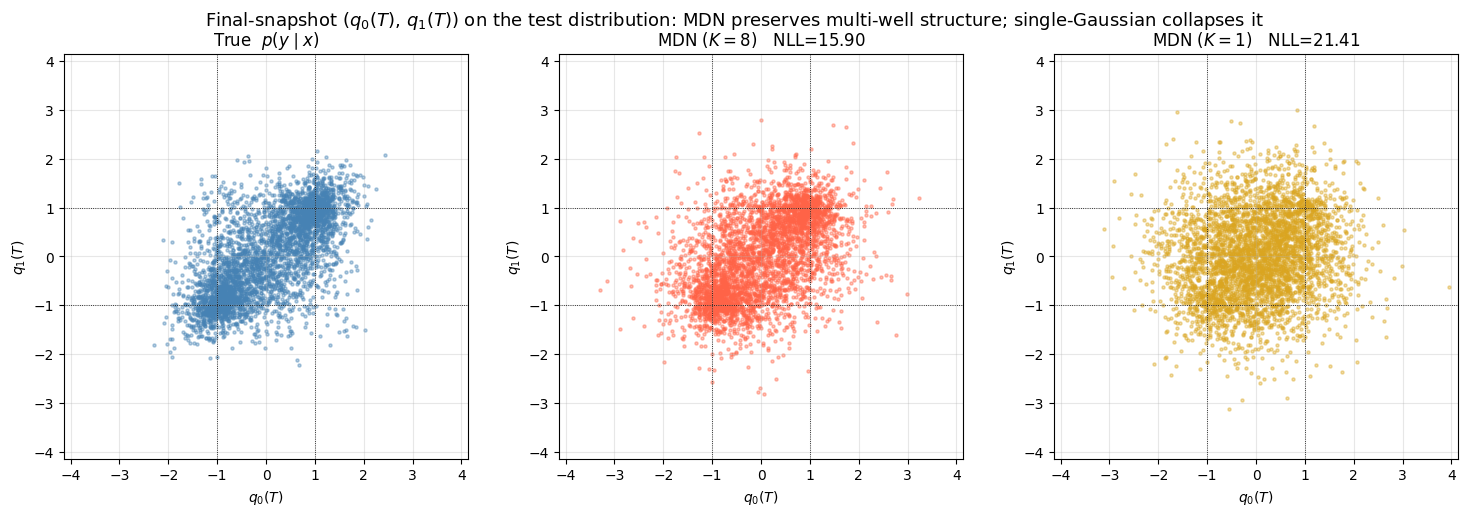}
    \caption{
    Coupled double-well benchmark: final-snapshot positions
    $(q_0(T), q_1(T))$ for $4{,}000$ held-out test inputs drawn from
    the same joint distribution as the AL test set.
    \textbf{Left:} ground-truth simulator samples; mass concentrates
    near $(\pm 1, \pm 1)$, the signature of the per-particle bimodal
    regime at $\sigma \gtrsim \sqrt{a/2} \approx 0.71$, with a
    diagonal enhancement that reflects neighbour alignment under
    non-zero $\kappa$.
    \textbf{Middle:} samples from the $K = 8$ MDN ensemble (net $0$);
    the mixture head preserves the same multi-well envelope (test
    NLL $15.90$).
    \textbf{Right:} samples from a $K = 1$ single-Gaussian MDN trained
    with the same architecture and budget; per-input unimodality
    forces mass into a single broad ellipse centred near the origin
    (test NLL $21.41$; $+5.51$ nats over $K = 8$). Dotted lines mark
    the well locations $q = \pm 1$.
    }
    \label{fig:app_dw_mdn_baseline}
\end{figure}

\subsection{Material Science Application: Alloy Phase Competition}
\label{app:exp_ternary_phases}

This appendix details the synthetic phase-competition simulator and the
active-learning protocol summarised in
Section~\ref{sec:exp_ternary_phases}, and reports the SBAL and MaxDist
batch-diversity variants deferred from the main text.

\paragraph{Phase model.}
The simulator implements a softmin-over-quadratic-free-energies model
inspired by the CALPHAD framework~\cite{lukas2007calphad}. For each of
$N_\phi = 4$ nominal phases $\phi$ we draw a positive-definite
$3 \times 3$ Hessian as
$H_\phi = R_\phi R_\phi^\top + 0.3 \cdot I_3$
with $R_\phi$ a Gaussian matrix ($R_{\phi,ij} \sim \mathcal{N}(0, 0.5^2)$),
and a linear bias $b_\phi \in \mathbb{R}^3$ with
$b_{\phi,i} \sim \mathcal{N}(0, 2^2)$. Writing $x_3 = (x_A, x_B, x_C)$ for
the composition coordinates extended onto the $3$-simplex, the Gibbs
free energy of phase $\phi$ is the quadratic form
\[
    G_\phi(x_3) = \tfrac{1}{2}\, x_3^\top H_\phi\, x_3 + b_\phi^\top x_3,
\]
and the phase posterior at temperature $\tau_G$ is
$\pi(\phi \mid x_3) \propto \exp(-G_\phi(x_3) / \tau_G)$. This posterior
does not depend on the process parameters $p \in \mathbb{R}^{n_{\mathrm{proc}}}$;
phase assignment is a function of composition alone.

The conditional response mean for phase $\phi$ is
\[
    \mu_\phi(x_3, p) = c_\phi^\top x_3 + d_\phi
        + \tfrac{1}{2}\sin(\omega_\phi^\top x_3)
        + W_\phi^\top p,
\]
with $c_\phi \in \mathbb{R}^3$, $d_\phi \in \mathbb{R}$,
$\omega_\phi \in \mathbb{R}^3$, and
$W_\phi \in \mathbb{R}^{n_{\mathrm{proc}}}$ all drawn from
independent Gaussians (entry-wise scales
$c_{\mu\text{-scale}} = 6$ for $c_\phi$,
$2$ for $d_\phi$ and $\omega_\phi$,
$1$ for $W_\phi$). The per-phase log-variance is the affine function
$\log \sigma_\phi^2(x_3) = e_\phi^\top x_3 + f_\phi$ with
$e_{\phi,i} \sim \mathcal{N}(0, 0.5^2)$ and
$f_\phi \sim \mathcal{N}(-1, 0.3^2)$, so per-phase noise scales vary
smoothly with composition around a typical $\sigma_\phi \approx 0.6$.

The output is sampled by first drawing $\phi \sim \pi(\cdot \mid x_3)$,
then drawing
$y \mid \phi, x_3, p \sim \mathcal{N}(\mu_\phi(x_3, p),\, \sigma_\phi^2(x_3))$;
the ground-truth conditional $p^\star(y \mid x)$ is therefore a Gaussian
mixture in $y$ with composition-dependent mixing weights, composition-
and-process-dependent component means, and composition-dependent
component variances. The number of components with appreciable weight
at any given $x$ is determined by the realised phase structure of the
chosen system seed, summarised below.

\paragraph{Inputs and outputs.}
The input is the $8$-dimensional vector
$x = (x_A, x_B, p_1, \dots, p_6)$ with $(x_A, x_B, x_C)$ drawn from a
symmetric Dirichlet$(\alpha = (1, 1, 1))$ on the $3$-simplex and
$p_i \sim \mathrm{Uniform}(-1, 1)$ component-wise. The output is the
scalar response $y \in \mathbb{R}$.

\paragraph{Realised phase structure at the system seed.}
At the system seed used throughout
($\texttt{system\_seed} = 12$, $N_\phi = 4$, $\tau_G = 0.08$), all four
nominal phases carry appreciable mass on the simplex: a grid-evaluation
of $\pi(\phi \mid x_3)$ on $20{,}301$ uniform simplex points gives
marginal phase masses
$48.18\%$, $25.60\%$, $18.84\%$, $7.38\%$ (sorted, summing to $100\%$).
The boundary region defined by
$\max_\phi \pi(\phi \mid x_3) < 0.7$ occupies $11.40\%$ of the simplex
slice; tightening to $\max_\phi \pi < 0.5$ contracts it to $1.14\%$,
and relaxing to $\max_\phi \pi < 0.8$ expands it to $17.60\%$. The
realised problem is thus a genuine $4$-phase competition with curved
boundaries between phase domains in the $8$-dimensional input space.
Inside any single phase domain the property response collapses to a
single Gaussian whose mean varies smoothly with the $6$-dimensional
process subspace, while in the boundary region the conditional is a
genuine multi-component mixture.

\paragraph{Model architecture.}
Each ensemble member is an MDN with a $2$-hidden-layer MLP backbone,
$64$ hidden units per layer, GELU activations~\cite{hendrycks2016gelu},
and a final linear projection to $K_{\mathrm{MDN}} = 4$ mixture
components with diagonal covariances over the scalar property output.
The ensemble size is $n_{\mathrm{ens}} = 8$, with members initialised
from different PRNG seeds and trained independently on the same
labeled set.

\paragraph{Training schedule.}
All members are trained with AdamW (peak learning rate
$2\times 10^{-4}$, weight decay $5\times 10^{-2}$) preceded by adaptive
gradient clipping at $0.1$, with the same warmup-then-exponential-decay
schedule as in Appendix~\ref{app:exp_multimodal}. The number of gradient
steps per round is set adaptively to
$n_{\mathrm{iter}}(n_{\mathrm{lab}}) = \min(40{,}000,\;
200 \cdot n_{\mathrm{lab}})$ with a minimum of $2{,}000$ steps per
round, and mini-batch size $64$. The smaller peak LR and stronger
weight decay (relative to the multimodal and double-well benchmarks)
reflect the larger per-round step budget; as in the other benchmarks,
optimiser, architecture, and active-learning hyperparameters are fixed
across all acquisition functions and are not tuned per method.

\paragraph{Hyperparameters.}
Tables~\ref{tab:app_ternary_sim_hparams} and
\ref{tab:app_ternary_model_hparams} list the simulator/data and
model/AL hyperparameters used for all $30$ runs reported below.

\begin{table}[htbp]
\centering
\caption{Simulator and data hyperparameters for the ternary
phase-competition benchmark.}
\label{tab:app_ternary_sim_hparams}
\begin{tabular}{lll}
\toprule
Symbol / option & Value & Meaning \\
\midrule
$N_\phi$ & $4$ & Nominal phases (all four active at the chosen seed) \\
$\tau_G$ & $0.08$ & Free-energy softmin temperature \\
$n_{\mathrm{proc}}$ & $6$ & Process parameters per input \\
$c_{\mu\text{-scale}}$ & $6$ & Scale of per-phase composition coupling \\
$H_\phi$ raw scale & $0.5$ & $R_{\phi,ij} \sim \mathcal{N}(0, 0.5^2)$ \\
$H_\phi$ regulariser & $0.3 \cdot I_3$ & PD floor on Hessian \\
$b_\phi$ scale & $2$ & Linear-bias scale on composition \\
$d_\phi$ scale & $2$ & Per-phase mean offset scale \\
$\omega_\phi$ scale & $2$ & Sinusoidal-modulation frequency scale \\
$W_\phi$ scale & $1$ & Per-phase process-coupling scale \\
$e_\phi$ scale & $0.5$ & Composition-dependence of $\log \sigma_\phi^2$ \\
$f_\phi$ mean / std & $-1\,/\,0.3$ & Per-phase log-variance offset \\
$(x_A, x_B, x_C)$ prior & $\mathrm{Dirichlet}(1, 1, 1)$ & Composition prior on $3$-simplex \\
$p_i$ prior & $\mathrm{Uniform}(-1, 1)$ & Process-parameter prior \\
$\texttt{system\_seed}$ & $12$ & PRNG seed for $H_\phi, b_\phi, c_\phi, \dots$ \\
candidate pool size & $50{,}000$ & Unlabeled pool $|\mathcal{X}_{\mathrm{pool}}|$ \\
test set size & $2{,}000$ & Held-out evaluation set \\
initial labelled & $100$ & Initial labeled budget \\
\bottomrule
\end{tabular}
\end{table}

\begin{table}[htbp]
\centering
\caption{Model and active-learning hyperparameters for the ternary
phase-competition benchmark.}
\label{tab:app_ternary_model_hparams}
\begin{tabular}{lll}
\toprule
Symbol / option & Value & Meaning \\
\midrule
$n_{\mathrm{ens}}$ & $8$ & Ensemble size \\
$K_{\mathrm{MDN}}$ & $4$ & MDN mixture components per member \\
hidden features & $64$ & MLP width \\
depth & $2$ & Number of MLP hidden layers \\
activation & GELU & Nonlinearity~\cite{hendrycks2016gelu} \\
training batch size & $64$ & Mini-batch size for AdamW \\
peak learning rate & $2\times 10^{-4}$ & AdamW peak LR (after warmup) \\
weight decay & $5\times 10^{-2}$ & AdamW weight decay \\
gradient clip & $0.1$ & Adaptive gradient clipping threshold \\
min iter per round & $2{,}000$ & Floor on adaptive iteration count \\
$n_{\mathrm{iter}}$ (cap) & $40{,}000$ & Max gradient steps per round \\
iter-per-sample & $200$ & Slope of adaptive schedule \\
AL rounds & $30$ & Number of acquisition rounds \\
query batch size & $15$ & Queries acquired per round \\
acquisition batch size & $256$ & Chunk size for scoring the pool \\
seeds & $\{0, 1, 2, 3, 4\}$ & Data / training seeds \\
\bottomrule
\end{tabular}
\end{table}

\paragraph{Active-learning protocol.}
For each of $5$ seeds we instantiate a fresh pool of $50{,}000$
candidate inputs, a held-out test set of $2{,}000$ inputs, and an
initial labeled set of $100$ inputs. Acquisition scores are evaluated
on the remaining pool in chunks of $256$. Each run performs $30$
rounds of $15$ queries each, so the final labeled set contains
$100 + 30 \cdot 15 = 550$ inputs ($1.1\%$ of the pool). Test NLL is
evaluated at the end of every round.

\paragraph{Acquisition functions and selection strategies.}
We evaluate the same five base acquisitions as in Appendix~\ref{app:exp_multimodal} (Random, Epistemic Variance, MI-LB, BAIT~\cite{ash2021gone}, Core-Set~\cite{sener2018active}), paired with three selection strategies for the score-based ones: top-$k$ for Random / Variance / MI-LB, SBAL for Variance and MI-LB, and MaxDist for MI-LB. SBAL temperature is $T = 1.0$; MaxDist score weight is $w = 1$. BAIT and Core-Set bypass the score-to-batch step entirely (Fisher-trace forward+backward greedy and k-Center-Greedy on the MDN backbone respectively). Definitions follow Appendix~\ref{app:exp_multimodal} unchanged.

\paragraph{Seed-to-seed spread of the top-$k$ base acquisitions.}
Section~\ref{sec:exp_ternary_phases} reports that MI-LB is markedly more
consistent across seeds than Random or Epistemic Variance in the
data-scarce regime, with the gap closing by $n \gtrsim 400$.
Table~\ref{tab:app_ternary_seed_ranges} gives the underlying numbers:
the seed-to-seed min--max range of test NLL averaged over each budget
band. In the early band $n \in [115, 250]$, MI-LB's range is
$\sim\!2.6\times$ tighter than Random's and $\sim\!4.5\times$ tighter
than Epistemic Variance's; the inflated Variance spread comes from
seeds in which the second-moment signal selects unhelpful queries
before the ensemble has accumulated enough data to disagree
informatively over modal structure. By the late band ($n \gtrsim 400$)
all three ranges sit within a factor of $\sim\!2.4$ of each other.

\begin{table}[htbp]
\centering
\small
\caption{Seed-to-seed min--max range of test NLL (in nats), averaged
over each budget band, for the top-$k$ base acquisitions on the
synthetic phase-competition benchmark
($\texttt{system\_seed} = 12$, $5$ seeds).}
\label{tab:app_ternary_seed_ranges}
\begin{tabular}{lcc}
\toprule
Acquisition (top-$k$) & Early ($n \in [115, 250]$) & Late ($n \gtrsim 400$) \\
\midrule
Random             & $1.82$ & $0.24$ \\
Epistemic Variance & $3.13$ & $0.31$ \\
MI-LB                & $\mathbf{0.69}$ & $\mathbf{0.13}$ \\
\bottomrule
\end{tabular}
\end{table}

\paragraph{Results for SBAL and MaxDist variants.}
Table~\ref{tab:app_ternary_final_nll} reports the final test NLL at
$n = 550$ across $5$ seeds for all combinations, and
Figure~\ref{fig:app_ternary_sbal_maxdist} shows the corresponding
learning curves. Three observations.

First, MaxDist is the mean-best acquisition on this benchmark:
MI-LB (MaxDist) at $w = 1$ finishes at $1.945 \pm 0.061$, narrowly
ahead of MI-LB (top-$k$) at $1.985 \pm 0.043$. The seed-to-seed spread
is large enough that the two are not separated cleanly---per-seed,
MaxDist wins on $2$ seeds, ties on $1$, and is within $0.01$ nat of
top-$k$ on the other $2$---so the safest summary is that MaxDist
matches MI-LB (top-$k$) and offers a small mean improvement, the same
qualitative conclusion as on the coupled double-well benchmark of
Appendix~\ref{app:exp_coupled_double_well}.

Second, SBAL hurts MI-LB: MI-LB (SBAL) at $T = 1.0$ degrades from
$1.985 \pm 0.043$ to $2.109 \pm 0.054$, with MI-LB (top-$k$) winning on
all $5$ seeds. This is consistent with the small-budget,
narrow-high-information-manifold argument of
Appendix~\ref{app:exp_coupled_double_well}: at $T = 1.0$ enough mass
leaks onto low-score points that the effective fraction of
informative queries drops, and the realisable improvement from batch
diversity is bounded above by the MaxDist result that preserves the
informative ranking exactly.

Third, SBAL modestly helps Epistemic Variance:
$\mathbf{2.029 \pm 0.050}$ for SBAL versus $2.062 \pm 0.087$ for top-$k$, with
the SBAL variant winning on $4$ of $5$ seeds and producing a tighter
seed-to-seed spread. Because the Variance-based score is itself already
a noisier ranking signal than MI-LB on this benchmark, its top-$k$
ordering is less informative, and stochastic relaxation does not cost
what it does for MI-LB, at this temperature it actively helps by
hedging against unhelpful early queries.

\paragraph{BAIT and Core-Set.}
Core-Set ties MI-LB within seed-to-seed noise ($1.989 \pm 0.043$ vs $1.985 \pm 0.043$); the tie reflects benchmark geometry, with phase boundaries lying directly on the $2$-D composition simplex so that k-Center-Greedy in feature space hits the same boundary MI-LB targets through entropy disagreement. BAIT is the only acquisition whose mean NLL falls within seed-to-seed noise of Random ($2.190 \pm 0.090$ vs $2.209 \pm 0.085$): the $0.019$ improvement is an order of magnitude smaller than the $\sim\!0.09$ standard deviation of either run, whereas every other acquisition in Table~\ref{tab:app_ternary_final_nll} beats Random by at least $0.10$ in mean NLL --- well outside the noise band.
 The single-MC last-layer Fisher embedding produces a noisy candidate ranking on the $1$-D scalar output, and the $30 \times 15$ small-budget protocol does not give the forward+backward greedy enough rounds to recover --- the same failure mode anticipated in Appendix~\ref{app:exp_multimodal} (where BAIT had the largest seed-to-seed spread of any reported method) but more pronounced here because of the lower-dimensional output.

\begin{table}[htbp]
\centering
\small
\caption{Final test NLL at $n = 550$, mean $\pm$ std across $5$ seeds
$\{0,1,2,3,4\}$ (min--max in brackets) for the synthetic
phase-competition benchmark at $\texttt{system\_seed} = 12$. SBAL
temperature is $T = 1.0$ for both SBAL runs; MaxDist uses score weight
$w = 1$.}
\label{tab:app_ternary_final_nll}
\begin{tabular}{lcc}
\toprule
Acquisition & Mean $\pm$ std & Min -- Max \\
\midrule
Random                       & $2.209 \pm 0.085$ & $2.136$ -- $2.346$ \\
Epistemic Variance (top-$k$) & $2.062 \pm 0.087$ & $1.918$ -- $2.122$ \\
Epistemic Variance (SBAL)    & $2.029 \pm 0.050$ & $1.977$ -- $2.107$ \\
MI-LB (top-$k$)                & $1.985 \pm 0.043$ & $1.911$ -- $2.017$ \\
MI-LB (SBAL)                   & $2.109 \pm 0.054$ & $2.057$ -- $2.174$ \\
MI-LB (MaxDist)                & $\mathbf{1.945 \pm 0.061}$ & $1.870$ -- $2.023$ \\
BAIT                         & $2.190 \pm 0.090$ & $2.087$ -- $2.265$ \\
Core-Set                     & $1.989 \pm 0.043$ & $1.951$ -- $2.058$ \\
\bottomrule
\end{tabular}
\end{table}

\begin{figure}[htbp]
    \centering
    \includegraphics[width=0.85\textwidth]{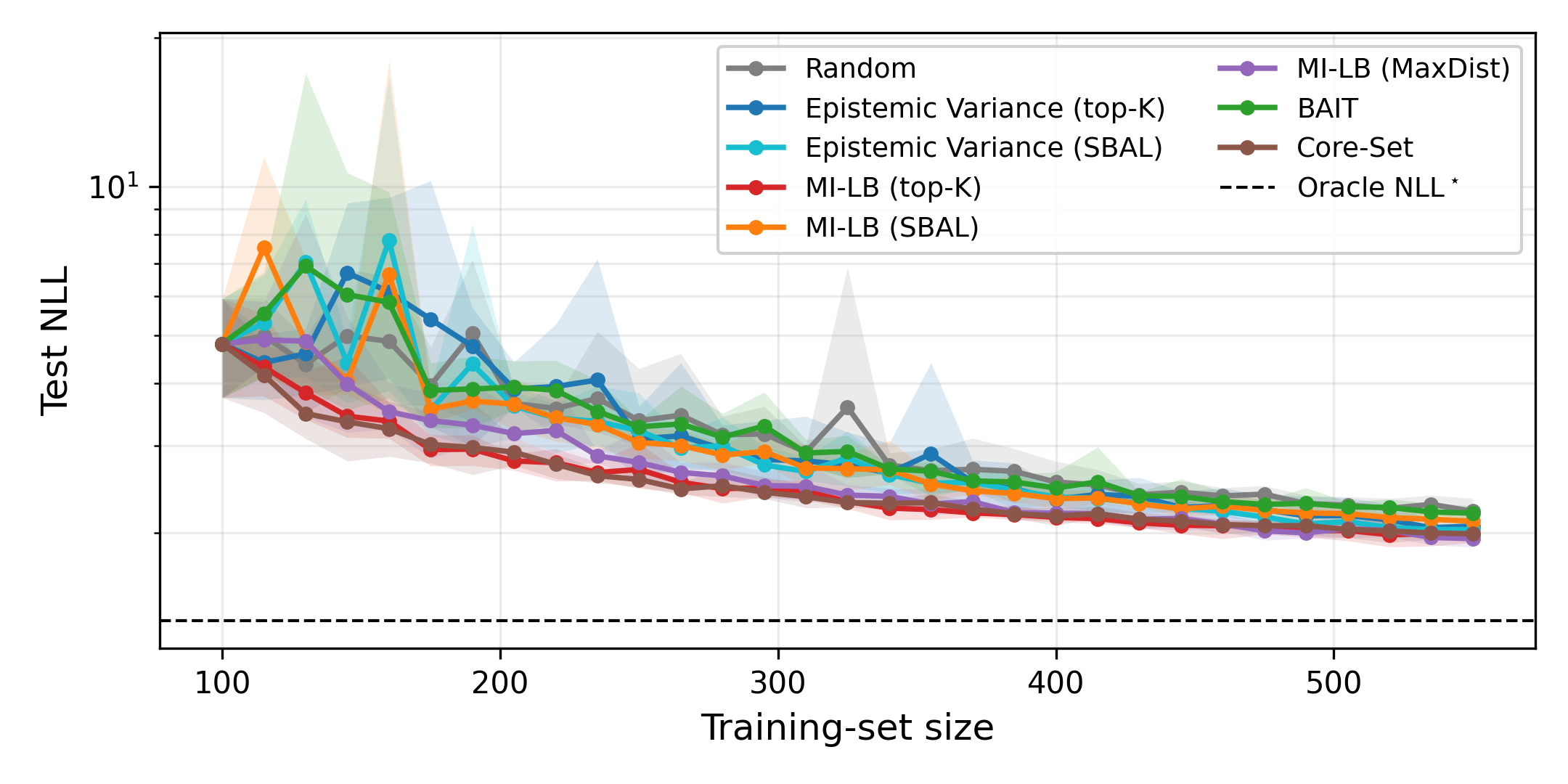}
    \caption{
        Synthetic phase-competition benchmark: test NLL vs.\ training-set size for all eight (acquisition, selection-strategy) combinations including the BAIT and Core-Set baselines ($5$ seeds; shaded bands min--max). SBAL $T=1.0$, MaxDist $w=1$. The dotted line marks the oracle floor $\mathrm{NLL}^{*} \approx 1.33$. The $y$-axis is clipped to $[1.2, 7.0]$ to suppress early-iteration training-instability spikes for the SBAL variants and BAIT.
    }
    \label{fig:app_ternary_sbal_maxdist}
\end{figure}

\paragraph{Remark on per-acquisition simplex visualisation.}
A per-acquisition visualisation of selected points projected onto the
$(x_A, x_B)$ simplex slice would complement the boundary-seeking
narrative of Section~\ref{sec:exp_ternary_phases}. The production runs
reported in Table~\ref{tab:app_ternary_final_nll} did not log query
indices to the experiment tracker, so we do not include such a figure
here; the example notebook accompanying this benchmark provides a
single-seed reproduction that materialises the simplex projection from
local state.

\paragraph{Why a mixture head: $K = 1$ vs $K = 4$.}
To justify the choice of a mixture head
($K_{\mathrm{MDN}} = 4$) over a single-Gaussian head ($K = 1$), we
train both with the identical architecture used in the AL experiments
(depth-$2$ MLP, $64$ hidden units, $n_{\mathrm{ens}} = 8$) on a larger
offline dataset of $100{,}000$ samples drawn from the same input
distribution as the active-learning experiments, and evaluate both on
a held-out test set of $10{,}000$ samples. The ensemble-averaged test
NLL is $\mathbf{1.34}$ for $K = 4$ versus $\mathbf{1.88}$ for
$K = 1$, against an analytically computed oracle floor of
$\mathrm{NLL}^{*} = 1.33$. The $K = 4$ ensemble therefore sits within
$0.01$ nat of the oracle---consistent with the fact that the simulator
is itself a $4$-component Gaussian mixture per input, so the $K = 4$
MDN model class contains the true distribution. The $K = 1$ model pays
a $0.55$-nat penalty at the same budget: forced to place a single
Gaussian per input, it cannot resolve phase competition in the
boundary region and must smear mass across phase means that differ by
several nats (per-phase median means $+2.9$, $+6.9$, $+3.7$, $-4.0$ on
this simulator instance). The penalty is smaller than on the coupled
double-well benchmark ($5.51$ nats, Appendix~\ref{app:exp_coupled_double_well})
because the per-phase response is a unimodal Gaussian with bounded
variance, so the smearing cost is bounded; it is still clearly
nonzero, and the active-learning experiments on this benchmark
therefore use $K_{\mathrm{MDN}} = 4$ throughout.

Figure~\ref{fig:app_ternary_mdn_baseline} shows the qualitative picture
on a $120$-point simplex grid with process parameters pinned to zero.
The $K = 4$ ensemble's predicted conditional mean
$\hat{\mathbb{E}}[Y \mid x]$ tracks the ground-truth
$\mathbb{E}^{*}[Y \mid x]$ to within roughly $\pm 1$ nat, with
residuals smoothly distributed across the simplex. The $K = 1$
residual is visibly structured and larger (up to $\pm 4$) at the
intersections of the phase-boundary network, where the single-Gaussian
head cannot simultaneously fit the correct mean and an appropriate
variance---exactly the failure mode anticipated in
Section~\ref{sec:exp_ternary_phases}.

\begin{figure}[htbp]
    \centering
    \includegraphics[width=\textwidth]{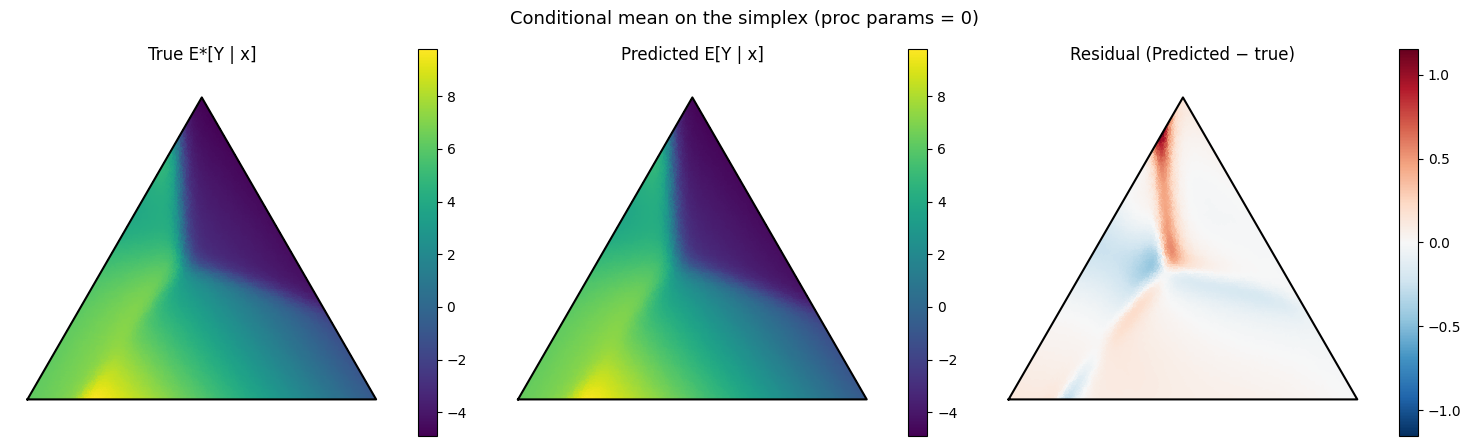}
    \\[1ex]
    \includegraphics[width=\textwidth]{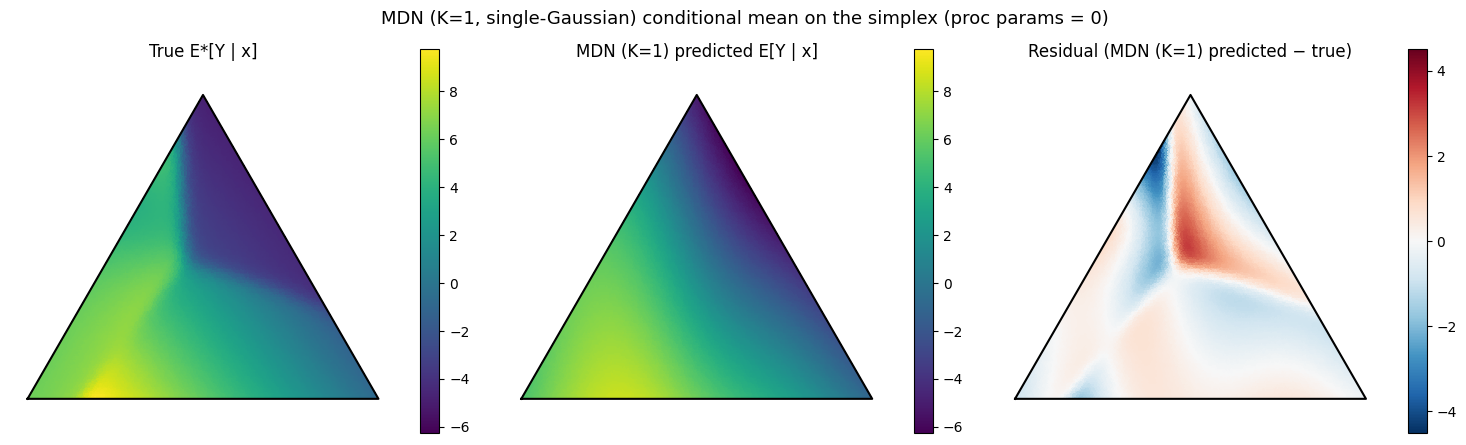}
    \caption{
    Synthetic phase-competition benchmark
    ($\texttt{system\_seed} = 12$): predicted conditional mean on a
    $120$-point simplex grid (process parameters pinned to zero) for
    the $K = 4$ MDN ensemble (\textbf{top row}) and a $K = 1$
    single-Gaussian MDN (\textbf{bottom row}), both trained offline on
    $100{,}000$ samples with the architecture used throughout the AL
    experiments.
    \textbf{Left column:} ground-truth $\mathbb{E}^{*}[Y \mid x]$.
    \textbf{Middle column:} ensemble-predicted
    $\hat{\mathbb{E}}[Y \mid x]$.
    \textbf{Right column:} residual (predicted $-$ true).
    The $K = 4$ residuals sit within $\sim\!\pm 1$ and are smoothly
    distributed; the $K = 1$ residuals reach $\pm 4$ and concentrate at
    the intersections of the phase-boundary network, where the
    single-Gaussian head cannot represent the competing phase
    components. The ensemble-averaged test NLL on a held-out set of
    $10{,}000$ samples is $1.34$ ($K = 4$) vs $1.88$ ($K = 1$),
    against an oracle floor of $1.33$.
    }
    \label{fig:app_ternary_mdn_baseline}
\end{figure}

\subsection{Compute Resources}
\label{app:compute_resources}

All experiments were run on a workstation with $7\times$ NVIDIA RTX
A6000 GPUs ($48$\,GB VRAM each), $128$ CPU cores, and approximately
$504$\,GiB of system memory. Each active-learning run uses a single
A6000 GPU; peak GPU memory fits comfortably within the $48$\,GB budget
for all configurations (small MLP backbones, ensemble size $8$), so
GPU memory is not a binding constraint. Per benchmark we evaluate
eight (acquisition, selection-strategy) combinations across $5$ seeds,
giving $40$ runs per benchmark and $120$ runs in total across the
three benchmarks. Per-run wall-clock times, extracted from the W\&B
logs and reported as the median over $5$ seeds, are summarised in
Table~\ref{tab:app_compute_runtimes}. Total compute across the three
benchmarks is approximately $25$ GPU-hours, of which $\sim\!12$\,h
(roughly half) is the $15$ BAIT runs,  BAIT is the only acquisition
with non-trivial selection overhead, since its forward$+$backward
greedy on the per-input Fisher embeddings of the full
$50{,}000$-point pool scales with both pool size and query batch. All
other acquisitions, including Core-Set, complete in roughly the same
wall time as Random per benchmark.

\begin{table}[htbp]
\centering
\small
\caption{Per-run wall-clock time on a single NVIDIA RTX A6000, median
over $5$ seeds, for the three paper benchmarks. Each entry is the
end-to-end runtime of one $20$-round (multimodal, double-well) or
$30$-round (ternary) active-learning run. Total compute across all
$120$ runs is $\sim\!25$ GPU-hours, of which $\sim\!12$\,h is BAIT
($\sim\!8.6$\,h on the double-well benchmark alone).}
\label{tab:app_compute_runtimes}
\begin{tabular}{lccc}
\toprule
Acquisition (selection) & Multimodal & Double-well & Ternary \\
\midrule
Random (top-$k$)               & $2.7$ min  & $3.1$ min  & $12.8$ min \\
Epistemic Variance (top-$k$)   & $3.2$ min  & $3.4$ min  & $13.6$ min \\
MI-LB (top-$k$)                & $3.1$ min  & $3.4$ min  & $13.1$ min \\
BAIT (top-$k$)                 & $22.4$ min & $1.35$ h   & $15.0$ min \\
Core-Set (top-$k$)             & $3.5$ min  & $4.4$ min  & $14.2$ min \\
Epistemic Variance (SBAL)      & $3.1$ min  & $3.4$ min  & $13.2$ min \\
MI-LB (SBAL)                   & $3.1$ min  & $3.4$ min  & $13.2$ min \\
MI-LB (MaxDist)                & $3.7$ min  & $3.9$ min  & $13.6$ min \\
\bottomrule
\end{tabular}
\end{table}



\end{document}